\pdfoutput=1
\documentclass{article}

\usepackage{microtype}
\usepackage{graphicx}
\usepackage{booktabs} 
\usepackage{amsthm}
\usepackage{thmtools}
\usepackage{thm-restate}
\usepackage{caption,subcaption}

\usepackage{todonotes}

\theoremstyle{plain}

\usepackage[algo2e,lined]{algorithm2e}

\SetCommentSty{mycommfont}

\usepackage{hyperref}

\usepackage{lipsum}

\newcommand{\gup}{f}
\newcommand{\glo}{g}

\newcommand{\priori}[1]{p(\theta_{#1})}

\newcommand{\likelihood}{p(h_T|\theta, \pi_\phi)}
\newcommand{\likelihoodi}[1]{p(h_T|\theta_{#1}, \pi_\phi)}

\usepackage[accepted]{icml2021}


\usepackage{amsmath,amsfonts,amssymb,bm}


\newcommand{\iid}{\overset{\scriptstyle{\text{i.i.d.}}}{\sim}}








\def\eqref#1{equation~\ref{#1}}









\def\1{\bm{1}}

\newcommand{\pypx}[2]{\frac{\partial #1}{\partial #2}}
\newcommand{\ddx}[1]{\frac{d}{d #1}}
\newcommand{\dydx}[2]{\frac{d #1}{d #2}}










\DeclareMathAlphabet{\mathsfit}{\encodingdefault}{\sfdefault}{m}{sl}
\SetMathAlphabet{\mathsfit}{bold}{\encodingdefault}{\sfdefault}{bx}{n}











\newcommand{\E}{\mathbb{E}}

\newcommand{\R}{\mathbb{R}}



\DeclareMathOperator*{\argmax}{arg\,max}

\makeatletter
\def\tagform@#1{\maketag@@@{(\ignorespaces#1\unskip\@@italiccorr)}}
\renewcommand{\eqref}[1]{\textup{{\normalfont(\ref{#1}}\normalfont)}}
\makeatother


\icmltitlerunning{Deep Adaptive Design: Amortizing Sequential Bayesian Experimental Design}

\begin{document}

\twocolumn[
\icmltitle{Deep Adaptive Design: Amortizing Sequential Bayesian Experimental Design}



\icmlsetsymbol{equal}{*}

\begin{icmlauthorlist}
\icmlauthor{Adam Foster}{equal,ox}
\icmlauthor{Desi R. Ivanova}{equal,ox}
\icmlauthor{Ilyas Malik}{student}
\icmlauthor{Tom Rainforth}{ox}
\end{icmlauthorlist}

\icmlaffiliation{ox}{Department of Statistics, University of Oxford, UK}
\icmlaffiliation{student}{Work undertaken whilst at the University of Oxford}

\icmlcorrespondingauthor{Adam Foster}{adam.foster@stats.ox.ac.uk}

\icmlkeywords{Machine Learning, ICML, Bayesian Experimental Design, Optimal Experimental Design,
	Amortized Inference}

\vskip 0.3in
]



\printAffiliationsAndNotice{\icmlEqualContribution} 


\begin{abstract}
We introduce \emph{Deep Adaptive Design} (DAD), a method for amortizing the cost of adaptive Bayesian experimental design that allows experiments to be run in real-time. 
Traditional sequential Bayesian optimal experimental design approaches require substantial computation at \emph{each} stage of the experiment.
This makes them unsuitable for most real-world applications, where decisions must typically be made quickly.
DAD addresses this restriction by learning an amortized \emph{design network} upfront and then using this to rapidly run (multiple) adaptive experiments at deployment time.
This network represents a design \emph{policy} which takes as input the data from previous steps, and outputs the next design using a single forward pass; these design decisions can be made in milliseconds during the live experiment.
To train the network, we introduce contrastive information bounds that are suitable objectives for the sequential setting, and propose a customized network architecture that exploits key symmetries.
We demonstrate that DAD successfully amortizes the process of experimental design, outperforming alternative strategies on a number of problems.
\end{abstract}


\section{Introduction}
\label{sec:intro}

A key challenge across disciplines as diverse as psychology \citep{myung2013}, bioinformatics \citep{vanlier2012}, pharmacology \citep{lyu2019ultra} and physics \citep{dushenko2020sequential} is to design experiments so that the outcomes will be as informative as possible about the underlying process. Bayesian optimal experimental design (BOED) is a powerful mathematical framework for tackling this problem~\citep{lindley1956,chaloner1995}. 

In the BOED framework, outcomes $y$ are modeled in a Bayesian manner \citep{gelman2013bayesian,kruschke2014doing} using a likelihood $p(y|\theta,\xi)$ and a prior $p(\theta)$, where $\xi$ is our controllable design and $\theta$ is the set of parameters we wish to learn about. We then optimize $\xi$ to maximize the \emph{expected information gained} about $\theta$ (equivalently the mutual information between $y$ and $\theta$):
\begin{equation}
\label{eq:mi}
I(\xi) := \E_{p(\theta)p(y|\theta,\xi)}\left[ \log p(y|\theta,\xi) -\log p(y|\xi)\right].
\end{equation}
The true power of BOED is realized when it is used to design a sequence of experiments $\xi_1,...,\xi_T$, wherein it allows us to construct \emph{adaptive} strategies which utilize information gathered from past data 
to tailor each successive design $\xi_t$ during the progress of the experiment. The conventional, iterative, approach for selecting each $\xi_t$ is to fit the posterior $p(\theta|\xi_{1:t-1},y_{1:t-1})$ representing the updated beliefs about $\theta$ after $t-1$ iterations have been conducted, and then substitute this for the prior in \eqref{eq:mi}~\citep{ryan2016review,rainforth2017thesis,kleinegesse2020sequential}. The design $\xi_t$ is then chosen as the one which maximizes the resulting objective.

Unfortunately, this approach necessitates significant computational time to be expended \emph{between each step of the experiment} in order to update the posterior and compute the next optimal design. 
In particular, 
$I(\xi)$ is doubly intractable \citep{rainforth2018nesting,zheng2018robust} and its optimization constitutes a significant computational bottleneck.  This can be prohibitive to the practical application of sequential BOED as 
design decisions usually need to be made quickly for  the approach to be useful \citep{evans2005value}.

To give a concrete example, consider running an adaptive survey to understand political opinions~\citep{pasek2010optimizing}.  A question $\xi_t$ is put to a participant who gives their answer $y_t$ and this data is used to update an underlying model with latent variables $\theta$. Here sequential BOED is of immense value because previous answers can be used to guide future questions, ensuring that they are pertinent to the particular participant. However, it is not acceptable to have lengthy delays between questions to compute the next design, precluding existing approaches from being used.

To alleviate this problem, we propose \emph{amortizing} the cost of sequential experimental design, performing upfront training before the start of the experiment to allow very fast design decisions at deployment, when time is at a premium. This amortization is particularly useful in the common scenario where the same adaptive experimental framework will be deployed numerous times (e.g.~having multiple participants in a survey).  
Here amortization not only removes the computational burden from the live experiment, it also allows for sharing computation across multiple experiments, analogous to inference amortization that allows one to deal with multiple datasets~\citep{stuhlmuller2013learning}.

Our approach, called \textbf{Deep Adaptive Design (DAD)}, constructs a single \emph{design network} which takes as input the designs and observations from previous stages, and outputs the design to use for the next experiment. The network is learned by simulating hypothetical experimental trajectories and then using these to train the network to make near-optimal design decisions automatically. That is, it learns a \emph{design policy} which makes decisions as a function of the past data, and we optimize the parameters of this policy rather than an individual design. 
Once learned, the network eliminates the computational bottleneck at each iteration of the experiment, enabling it to be run both adaptively and quickly; it can also be used repeatedly for different instantiations of the experiment (e.g.~different human participants).

To allow for efficient, effective, and simple training, we show how DAD networks can be learned without any direct posterior or marginal likelihood estimation.  
This is achieved by reformulating the sequential BOED problem from its conventional iterative form, to a single holistic objective based on the \emph{overall} expected information gained from the entire experiment when using a policy to make each design decision deterministically given previous design outcome pairs.
We then derive contrastive bounds on this objective that allow for end-to-end training of the policy parameters with stochastic gradient ascent, thereby sidestepping both the need for inference and the double intractability of the EIG objective.
This approach has the further substantial benefit of allowing non-myopic adaptive strategies to be learned, that is strategies which take account of their own future decisions, unlike conventional approaches.

We further demonstrate a key permutation symmetry property of the optimal design policy, and use this to propose a customized architecture for the experimental design network. This is critical to allowing  effective amortization across time steps. The overall result of the theoretical formulation, novel contrastive bounds, and neural architecture is a methodology which enables us to bring the power of deep learning to bear on adaptive experimental design.

We apply DAD to a range of problems relevant to applications such as epidemiology, physics and psychology. We find that DAD is able to accurately amortize experiments, opening the door to running adaptive BOED in real time.


\vspace{-5pt}
\section{Background}
\label{sec:background}
\vspace{-3pt}

Because experimentation is a potentially costly endeavour, it is essential to design experiments in manner that maximizes the amount of information garnered.
The BOED framework, pioneered by \citet{lindley1956}, provides a powerful means of doing this in a principled manner.
Its key idea is to optimize the experimental design $\xi$ to maximize the expected amount of \emph{information} that will be gained about the latent variables of interest, $\theta$, upon observing the experiment outcome $y$. 

To implement this approach, we begin with the standard Bayesian modelling set-up consisting of an explicit likelihood model $p(y|\theta,\xi)$ for the experiment, and a prior $p(\theta)$ representing our initial beliefs about the unknown latent. After running a hypothetical experiment with design $\xi$ and observing $y$, our updated beliefs are the posterior $p(\theta|\xi,y)$. 
The amount of information that has been gained about $\theta$ can be mathematically described by the reduction in entropy from the prior to the posterior
\begin{equation}
	\text{IG}(\xi,y) = \text{H}\left[p(\theta)\right] - \text{H}\left[p(\theta|\xi,y) \right].
\end{equation}
The \emph{expected information gain} (EIG) is formed by taking the expectation over possible outcomes $y$, using the model itself to simulate these.  Namely we take an expectation with respect to $y\sim p(y|\xi)=\E_{p(\theta)}[p(y|\theta,\xi)]$, yielding
\begin{align*}
	I(\xi) :=&\, \E_{p(y|\xi)}\left[\text{IG}(\xi,y) \right] \\
	=&\,\E_{p(\theta)p(y|\theta,\xi)}\left[ \log p(\theta|\xi,y)-\log p(\theta)  \right]\\
	=&\,\E_{p(\theta)p(y|\theta,\xi)}\left[ \log p(y|\theta,\xi) - \log p(y|\xi) \right]
\end{align*}
which is the mutual information between $y$ and $\theta$ under design $\xi$. The optimal design is defined as $\xi^* = \argmax_{\xi\in\Xi} I(\xi)$, where $\Xi$ is the space of feasible designs.

It is common in BOED settings to be able to run multiple experiment iterations with designs $\xi_1,...,\xi_T$, observing respective outcomes $y_1,...,y_T$. 
One simple strategy for this case is \emph{static} design, also called fixed or batch design, which selects all $\xi_1,...,\xi_T$ before making any observation. 
The designs are optimized to maximize the EIG, with $y_{1:T}$ in place of $y$ and $\xi_{1:T}$ in place of $\xi$, effectively treating the whole sequence of experiments as one experiment with enlarged observation and design spaces.

\subsection{Conventional adaptive BOED}
\label{sec:conventional}
This static design approach is generally sub-optimal as it ignores the fact that information from previous iterations can substantially aid in the design decisions at future iterations.
The power of the BOED framework can thus be significantly increased by using an \emph{adaptive} design strategy that chooses each $\xi_t$ dependent upon $\xi_{1:t-1},y_{1:t-1}$. This enables us to use what has already been learned in previous experiments to design the next one optimally, resulting in a virtuous cycle of refining beliefs and using our updated beliefs to design good experiments for future iterations.

The conventional approach to computing designs adaptively is to fit the posterior distribution $p(\theta|\xi_{1:t-1},y_{1:t-1})$ at each step, 
and then optimize the EIG objective that uses this posterior in place of the prior~\citep{ryan2016review}
\begin{align}
	\label{eq:i_t}
	I(\xi_t) = \E_{p(\theta|\xi_{1:t-1},y_{1:t-1})p(y_t|\theta,\xi_t)}\left[ \log \frac{p(y_t|\theta,\xi_t)}{p(y_t|\xi_t)} \right]
\end{align}
where $p(y_t|\xi_t)=\E_{p(\theta|\xi_{1:t-1},y_{1:t-1})}[p(y_t|\theta,\xi_t)]$. 

Despite the great potential of the adaptive BOED framework, this conventional approach is very computationally expensive. 
At each stage $t$ of the experiment we must compute the posterior $p(\theta|\xi_{1:t-1},y_{1:t-1})$, which is costly and cannot be done in advance as it depends on $y_{1:t-1}$.  
Furthermore, the posterior is then used to obtain $\xi_t$ by maximizing the objective in~\eqref{eq:i_t}, which is computationally even more demanding as it involves the optimization of a doubly intractable quantity \citep{rainforth2018nesting,foster2019variational}.  Both of these steps must be done during the experiment, meaning it is infeasible to run adaptive BOED in real time experiment settings unless the model is unusually simple.

\subsection{Contrastive information bounds}
\label{sec:cib}
In \citet{foster2020unified}, the authors noted that if $\xi\in\Xi$ is continuous, approximate optimization of the EIG at each stage of the experiment can be achieved in a single \emph{unified} stochastic gradient procedure that both estimates and optimizes the EIG simultaneously. 
A key component of this approach is the derivation of several contrastive lower bounds on the EIG, inspired by work in representation learning \citep{oord2018representation,poole2018variational}.
One such bound is the Prior Contrastive Estimation (PCE) bound, given by
\begin{equation}
	I(\xi) \ge \E\left[\log \frac{p(y|\theta_0,\xi)}{\frac{1}{L+1}\sum_{\ell=0}^L p(y|\theta_\ell,\xi)} \right]
\end{equation}
where $\theta_0\sim p(\theta)$ is the sample used to generate $y\sim p(y|\theta,\xi)$ and $\theta_{1:L}$ are $L$ contrastive samples drawn independently from $p(\theta)$; as $L\to\infty$ the bound becomes tight.
The PCE bound can be maximized by stochastic gradient ascent (SGA) \citep{robbins1951stochastic} to approximate the optimal design $\xi$.
As discussed previously, in a sequential setting this stochastic gradient optimization is repeated $T$ times, with $p(\theta)$ replaced by $p(\theta|\xi_{1,t-1},y_{1:t-1})$ at step $t$.


\section{Rethinking Sequential BOED}
\label{sec:reformulation}

To enable adaptive BOED to be deployed in settings where design decisions must be taken quickly,  we first need to rethink the traditional iterative approach to produce a formulation which considers the entire design process holistically.
To this end, we introduce the concept of a \emph{design function}, or \emph{policy}, $\pi$ that maps from the set of all previous design--observation pairs to the next chosen
design.

Let $h_t$ denote the experimental \emph{history} $(\xi_1,y_1),...,(\xi_t,y_t)$. We can simulate histories for a given policy $\pi$, by sampling a $\theta\sim p(\theta)$, then, for each $t=1,...,T$, fixing $\xi_t = \pi(h_{t-1})$ (where $h_0=\varnothing$) and sampling $y_t \sim p(y|\theta,\xi_t)$.  The density of this generative process can be written as
\begin{equation}
\label{eq:jlikelihood}
p(\theta)p(h_T|\theta,\pi) = p(\theta)\prod\nolimits_{t=1}^T p(y_t|\theta,\xi_t).
\end{equation}
The standard sequential BOED approach  described in \S~\ref{sec:conventional} now corresponds to a costly implicit policy $\pi_s$, that performs posterior estimation followed by EIG optimization to choose each design.  By contrast, in DAD, we will learn a deterministic $\pi$ that chooses designs directly.

Another way to think about $\pi_s$ is that it is the policy which piecewise optimizes the following objective for $\xi_t | h_{t-1}$
\begin{equation}
\label{eq:reward}
I_{h_{t-1}}(\xi_t) := \E_{p(\theta|h_{t-1})p(y_t|\theta,\xi_t)}\left[ \log \frac{p(y_t|\theta,\xi_t)}{p(y_t|h_{t-1},\xi_t)} \right]
\end{equation}
 where $p(y_t|h_{t-1},\xi_t)=\E_{p(\theta|h_{t-1})}[p(y_t|\theta,\xi_t)]$. It is thus the optimal \emph{myopic} policy---that is a policy which fails to reason about its own future actions---for an objective given by the sum of EIGs from each experiment iteration. Note that this is not the optimal \emph{overall} policy as it fails to account for future decision making: some designs may allow better future design decisions than others
 than others \citep{gonzalez2016glasses,jiang2020binoculars}.\footnote{To give an intuitive example, consider
 the problem of placing two breakpoints on the line $[0,1]$ to produce the most evenly sized segments.
 The optimal myopic policy places its first design at
 $1/2$ and its second at either $1/4$ or $3/4$.  This is suboptimal since the best strategy is to place the two breakpoints
 at $1/3$ and $2/3$.}

Trying to learn an efficient policy that directly mimics $\pi_s$ would be very computationally challenging because of the difficulties of dealing with both inference and EIG estimation at each iteration of the training. Indeed, the natural way to do this involves running a full, very expensive, simulated sequential BOED process to generate each training example.

We instead propose a novel strategy that reformulates the sequential decision problem in a way that  completely eliminates the need for calculating either posterior distributions or intermediate EIGs, while also allowing for
non-myopic policies to be learned.  
This is done by exploiting an important property of the EIG: the total EIG of a sequential experiment is the sum of the (conditional) EIGs for each experiment iteration. This is formalized in the following result, which provides a single expression for the expected information gained from the entire sequence of $T$ experiments.
\vspace{-9pt}
\begin{restatable}{theorem}{terminal}
	\label{proposition:terminal}
	The total expected information gain for policy $\pi$ over a sequence of $T$ experiments is
	\begin{align}
	\mathcal{I}_T(\pi) &:= \E_{p(\theta)p(h_T|\theta,\pi)} \left[\sum\nolimits_{t=1}^{T} I_{h_{t-1}}(\xi_t)\right] \\
	=&\,\E_{p(\theta)p(h_T|\theta,\pi)}\left[ \log p(h_T|\theta,\pi) - \log p(h_T|\pi)\right]
		\label{eq:objective}
	\end{align}
	where $p(h_T|\pi) = \E_{p(\theta)}[p(h_T|\theta,\pi)]$.
\end{restatable}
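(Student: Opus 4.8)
The plan is to reduce the claimed identity to the chain rule for (conditional) mutual information, exploiting the fact that under a deterministic policy each design $\xi_t = \pi(h_{t-1})$ carries no randomness of its own. The first step is to rewrite the per-step reward $I_{h_{t-1}}(\xi_t)$ of \eqref{eq:reward} purely in terms of the history: since $\xi_t$ is a deterministic function of $h_{t-1}$, we have $p(y_t|\theta,\xi_t) = p(y_t|\theta,h_{t-1})$ and $p(y_t|h_{t-1},\xi_t) = p(y_t|h_{t-1})$, so $I_{h_{t-1}}(\xi_t)$ is exactly the mutual information between $y_t$ and $\theta$ conditioned on the event $\{h_{t-1}\}$; in particular it is a function of $h_{t-1}$ alone.

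Next I would take the outer expectation in the definition of $\mathcal{I}_T(\pi)$ term by term. Because $I_{h_{t-1}}(\xi_t)$ depends only on $h_{t-1}$, the tower property collapses $\E_{p(\theta)p(h_T|\theta,\pi)}$ to $\E_{p(h_{t-1}|\pi)}$, where $p(h_{t-1}|\pi) = \E_{p(\theta)}[p(h_{t-1}|\theta,\pi)]$ is the marginal history law. Expanding the inner conditional mutual information and using $p(h_{t-1}|\pi)\,p(\theta|h_{t-1})\,p(y_t|\theta,\xi_t) = p(\theta)\,p(h_{t-1}|\theta,\pi)\,p(y_t|\theta,\xi_t) = p(\theta)\,p(h_t|\theta,\pi)$, and then marginalising the remaining steps $y_{t+1:T}$ back in, gives
\begin{equation*}
\E_{p(\theta)p(h_T|\theta,\pi)}\big[I_{h_{t-1}}(\xi_t)\big] \;=\; \E_{p(\theta)p(h_T|\theta,\pi)}\!\left[\log\frac{p(y_t|\theta,\xi_t)}{p(y_t|h_{t-1},\xi_t)}\right].
\end{equation*}
Summing this over $t = 1,\dots,T$ and pulling the sum inside the single expectation (linearity) reduces the theorem to the pointwise identity $\sum_{t=1}^{T} \log\frac{p(y_t|\theta,\xi_t)}{p(y_t|h_{t-1},\xi_t)} = \log p(h_T|\theta,\pi) - \log p(h_T|\pi)$.

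Finally I would establish this telescoping identity. The numerator is immediate from \eqref{eq:jlikelihood}: $\sum_t \log p(y_t|\theta,\xi_t) = \log\prod_t p(y_t|\theta,\xi_t) = \log p(h_T|\theta,\pi)$. The denominator requires the sequential factorisation of the marginal, $p(h_T|\pi) = \prod_{t=1}^{T} p(y_t|h_{t-1})$, which I would prove by induction on $t$: using Bayes' rule $p(\theta|h_{t-1}) = p(\theta)p(h_{t-1}|\theta,\pi)/p(h_{t-1}|\pi)$ one checks that $p(h_{t-1}|\pi)\,p(y_t|h_{t-1}) = \int p(\theta)\prod_{s\le t} p(y_s|\theta,\xi_s)\,d\theta = p(h_t|\pi)$, together with $p(y_t|h_{t-1}) = p(y_t|h_{t-1},\xi_t)$ since $\xi_t$ adds nothing beyond $h_{t-1}$. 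Combining the two halves yields the pointwise identity, and taking expectations finishes the proof. The only place that needs care is this last factorisation — that the trajectory marginal (with designs pinned to their policy values) decomposes into one-step posterior-predictive factors — since that is exactly where the deterministic-design bookkeeping and the consistency of $p(\theta|h_{t-1})$ with $p(h_{t-1}|\pi)$ come in; everything else is routine manipulation of the generative model in \eqref{eq:jlikelihood}.
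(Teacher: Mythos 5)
Your proof is correct, but it telescopes in a different place than the paper's. The paper first converts each per-step objective $I_{h_{t-1}}(\xi_t)$ into an expected entropy reduction $\E_{p(y_t|\xi_t,h_{t-1})}\left[H[p(\theta|h_{t-1})] - H[p(\theta|h_t)]\right]$, lifts every expectation up to $p(h_T|\pi)$ via the tower property, telescopes the entropies to obtain $\E_{p(h_T|\pi)}\left[H[p(\theta)] - H[p(\theta|h_T)]\right]$, and only then converts back to the likelihood-ratio form of \eqref{eq:objective} by Bayes' rule. You instead stay in likelihood-ratio form throughout and reduce everything to the pointwise identity $\sum_{t=1}^T \log\frac{p(y_t|\theta,\xi_t)}{p(y_t|h_{t-1},\xi_t)} = \log p(h_T|\theta,\pi) - \log p(h_T|\pi)$, whose only nontrivial ingredient is the sequential factorisation $p(h_T|\pi) = \prod_{t=1}^T p(y_t|h_{t-1},\xi_t)$; your inductive verification of this, $p(h_{t-1}|\pi)\,p(y_t|h_{t-1},\xi_t) = \int p(\theta)\prod_{s\le t}p(y_s|\theta,\xi_s)\,d\theta = p(h_t|\pi)$, is exactly right. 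Both arguments hinge on the same two observations---that $\xi_t$ is deterministic given $h_{t-1}$, so each summand is a function of $h_{t-1}$ alone and the outer expectation collapses, and that a telescoping structure emerges---but yours is somewhat more direct, avoiding the entropy detour and yielding a stronger \emph{pointwise} identity rather than one holding only in expectation; the paper's version buys the explicit interpretation of $\mathcal{I}_T(\pi)$ as the expected reduction in entropy from the prior to the final posterior, which it leans on for intuition in the main text.
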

\vspace{-2pt}
The proof is given in Appendix~\ref{sec:proofs}. Intuitively, 
$\mathcal{I}_T(\pi)$ is the expected reduction in entropy from the prior $p(\theta)$ to the \emph{final} posterior $p(\theta|h_T)$, without considering the intermediate posteriors at all.
Note here a critical change from previous BOED formulations: $\mathcal{I}_T(\pi)$ is a function of the policy, not the designs themselves, with the latter now being random variables (due to their dependence on previous outcomes) that we take an expectation over.
This is actually a strict generalization of conventional BOED frameworks: static design corresponds to policy that consists of $T$ fixed designs with no adaptivity, for which~\eqref{eq:objective} coincides with $I(\xi_{1:T})$, while conventional adaptive BOED approximates $\pi_s$.

By reformulating our objective in terms of a policy, we have constructed a single end-to-end objective for adaptive, non-myopic design and which requires negligible computation at deployment time: once $\pi$ is learned, it can just be directly evaluated during the experiment itself.


\vspace{-2pt}
\section{Deep Adaptive Design}
\label{sec:dad}
Theorem~\ref{proposition:terminal} showed that the optimal design function $\pi^* = \argmax_\pi \mathcal{I}_T(\pi)$ is the one which maximizes the mutual information between the unknown latent $\theta$ and the full rollout of histories produced using that policy, $h_T$.
DAD looks to approximate $\pi^*$ explicitly using a neural network, which we  now refer to as the \emph{design network} $\pi_\phi$, with trainable parameters $\phi$. 
This policy-based approach marks a major break from existing methods, which do not represent design decisions explicitly as a function, but instead optimize designs on the fly during the experiment.

DAD amortizes the cost of experimental design---by training the network parameters $\phi$, the design network is taught to make correct design decisions across a wide range of possible experimental outcomes. This removes the cost of adaptation for the live experiment itself: during deployment the design network will select the next design nearly instantaneously with a single forward pass of the network.
Further, it offers a simplification and streamlining of the sequential BOED process: it only requires the upfront end-to-end training of a single neural network and thus negates the need to set up complex \emph{automated} inference and optimization schemes that would otherwise have to run in the background during a live experiment. 
A high-level summary of the DAD approach is given in Algorithm~\ref{algo:DAD}.

Two key technical challenges still stand in the way of realizing the potential of adaptive BOED in real time.
First, whilst the unified objective $\mathcal{I}_T(\pi)$ does not require the computation of intermediate posterior distributions, it remains an intractable objective due to the presence of $p(h_T|\pi)$.
To deal with this, we derive a family of lower bounds that are appropriate for the policy-based setting and use them
to construct stochastic gradient training schemes for $\phi$.
Second,  to ensure that this network can efficiently learn a mapping from histories to designs, we require an effective architecture. 
As we show later, the optimal policy is invariant to the order of the history, and we use this key symmetry to architect an effective design network.

{\setlength{\textfloatsep}{0pt}
\begin{algorithm}[t]
\SetAlgoLined
\SetKwInput{Input}{Input}
\SetKwInput{Output}{Output}
\Input{Prior $p(\theta)$, likelihood $p(y|\theta,\xi)$, number of steps $T$}
\Output{Design network $\pi_\phi$}
\While{\textnormal{training compute budget not exceeded}}{
    Sample $\theta_0 \sim p(\theta)$ and set $h_0=\varnothing$\;
    
    \For {$t=1,...,T$}{
    Compute $\xi_t = \pi_\phi(h_{t-1})$\;
    
    Sample $y_t \sim p(y|\theta_0,\xi_t)$\;
    
    Set $h_t=\{(\xi_1,y_1),...,(\xi_t,y_t)\}$ 
    }
    Compute estimate for ${d\mathcal{L}_T}/{d\phi}$ as per \S~\ref{sec:gradientestimation}
    
    Update $\phi$ using stochastic gradient ascent scheme
}
At deployment, $\pi_\phi$ is fixed, we take $\xi_t = \pi_\phi(h_{t-1})$, and each $y_t$ is obtained by running an experiment with $\xi_t$.
\caption{Deep Adaptive Design (DAD)}
\label{algo:DAD}
\end{algorithm}
}

\subsection{Contrastive bounds for sequential experiments}

Our high-level aim is to train $\pi_\phi$ to maximize the mutual information $\mathcal{I}_T(\pi_\phi)$.
In contrast to most machine learning tasks, this objective is \emph{doubly} intractable and cannot be directly evaluated or even estimated with a conventional Monte Carlo estimator, except in very special cases~\citep{rainforth2018nesting}.
In fact, it is extremely challenging and costly to derive \emph{any unbiased} estimate for it or its gradients.
To train $\pi_\phi$ with stochastic gradient methods, we will therefore introduce and optimize
\emph{lower bounds} on $\mathcal{I}_T(\pi_\phi)$, building on the ideas of \S~\ref{sec:cib}.

Equation~\eqref{eq:objective} shows that the objective function is the expected logarithm of a ratio of two terms. The first  is the likelihood of the history, $p(h_T|\theta,\pi)$, and can be directly evaluated using \eqref{eq:jlikelihood}.
The second term is an intractable marginal $p(h_T|\pi)$ that is different for each sample of the outer expectation
and must thus be estimated separately each time.

Given a sample $\theta_0,h_T\sim  p(\theta, h_T|\pi)$, we can perform this estimation by introducing 
$L$ independent \emph{contrastive} samples $\theta_{1:L}\sim p(\theta)$. We can then approximate the log-ratio in two different ways,
depending on whether or not we include $\theta_0$ in our estimate for $p(h_T|\pi)$:
\begin{align}
	\glo_L(\theta_{0:L},h_T) &= \log \frac{p(h_T|\theta_0,\pi)}{\frac{1}{L+1}\sum_{\ell=0}^L p(h_T|\theta_\ell,\pi)} \label{eq:gl} \\
	\gup_L(\theta_{0:L},h_T) &= \log \frac{p(h_T|\theta_0,\pi)}{\frac{1}{L}\sum_{\ell=1}^L p(h_T|\theta_\ell,\pi)}.\label{eq:fl}
\end{align}
These functions can both be evaluated by recomputing the likelihood of the history under each of the contrastive samples $\theta_{1:L}$.
We note that $\glo$ cannot exceed $\log(L+1)$, whereas $\gup$ is potentially unbounded (see Appendix~\ref{sec:proofs} for a proof).

We now show that using $\glo$ to approximate the integrand leads to a \emph{lower} bound on the overall objective $\mathcal{I}_T(\pi)$, whilst using $\gup$ leads to an \emph{upper} bound. During training, we focus on the lower bound, because it does not lead to unbounded ratio estimates and is therefore more numerically stable. We refer to this new lower bound as \emph{sequential PCE} (sPCE).
\begin{restatable}[Sequential PCE]{theorem}{seqace}
	\label{thm:seqace}
	For a design function $\pi$ and a number of contrastive samples $L\ge0$, let
	\begin{equation}
	\label{eq:sPCE_objective}
	\mathcal{L}_T(\pi,L) =
	\E_{p(\theta_{0},h_T|\pi)p(\theta_{1:L})}\left[ \glo_L(\theta_{0:L},h_T) \right]
	\end{equation}
	where $\glo_L(\theta_{0:L},h_T)$ is as per~\eqref{eq:gl}, and  $\theta_0,h_T \sim p(\theta,h_T|\pi)$, and $\theta_{1:L}\sim p(\theta)$ independently. 
	Given minor technical assumptions discussed in the proof, we have\footnote{$x_L\uparrow x$ means that $x_L$ is a monotonically increasing sequence in $L$ with limit $x$.}  
	\begin{equation}
	\mathcal{L}_T(\pi,L) \uparrow \mathcal{I}_T(\pi) \text{ as } L \to \infty
	\end{equation}
	at a rate $\mathcal{O}\left(L^{-1}\right)$.
\end{restatable}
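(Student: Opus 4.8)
The plan is to recognize $\mathcal{L}_T(\pi,L)$ as an instance of the standard InfoNCE/PCE family applied to a ``mega-experiment'' whose single observation is the full history $h_T$ and whose likelihood is $p(h_T|\theta,\pi)$ given by \eqref{eq:jlikelihood}. With that identification, the limit statement $\mathcal{L}_T(\pi,L) \to \mathcal{I}_T(\pi)$ is essentially the known fact that the PCE bound converges to the mutual information; the work is (i) verifying that the bound indeed equals the right mutual information, namely the one appearing in \eqref{eq:objective}, (ii) proving monotonicity in $L$, and (iii) pinning down the $\mathcal{O}(L^{-1})$ rate.

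First I would establish the lower bound and monotonicity. Fix $\theta_0, h_T$ and write $R_L := \frac{1}{L+1}\sum_{\ell=0}^L p(h_T|\theta_\ell,\pi)$, so the integrand is $\log p(h_T|\theta_0,\pi) - \log R_L$. Taking the conditional expectation over $\theta_{1:L}\sim p(\theta)$ only, note $\E[R_{L+1}\mid \theta_{0:L}]$ is a convex combination of $R_L$ and $p(h_T|\pi) = \E_{p(\theta)}[p(h_T|\theta,\pi)]$ with the averaging weight arranged so that, by a symmetrization/exchangeability argument on the contrastive indices, $\E[\log R_{L+1}] \le \E[\log R_L]$; hence $\E[-\log R_L]$ is increasing in $L$, giving $\mathcal{L}_T(\pi,L)\uparrow$. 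For the limit, by the strong law of large numbers $R_L \to p(h_T|\pi)$ almost surely as $L\to\infty$; under the ``minor technical assumptions'' (essentially a dominating integrability condition on $\log p(h_T|\theta_0,\pi)/p(h_T|\pi)$ plus a one-sided bound letting us invoke monotone/dominated convergence, using $\glo_L \le \log(L+1)$ and $\glo_L \ge \gup_{L}$-type controls) we pass the limit inside the expectation to obtain $\E\!\left[\log \frac{p(h_T|\theta_0,\pi)}{p(h_T|\pi)}\right] = \mathcal{I}_T(\pi)$ by \eqref{eq:objective}. Combining with monotonicity yields $\mathcal{L}_T(\pi,L)\uparrow \mathcal{I}_T(\pi)$.

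For the rate, I would write the gap as $\mathcal{I}_T(\pi) - \mathcal{L}_T(\pi,L) = \E\!\left[\log \frac{p(h_T|\pi)}{R_L}\right]$ and expand around $R_L = p(h_T|\pi)$. Normalizing, let $Z_L := R_L/p(h_T|\pi)$, which has conditional mean $\E[Z_L\mid\theta_0,h_T] = \frac{1}{L+1}\big(W_0 + L\big)$ where $W_0 := p(h_T|\theta_0,\pi)/p(h_T|\pi)$ is the $\ell=0$ term and the $\ell\ge 1$ terms each have mean $1$. So $Z_L = 1 + \frac{1}{L+1}\big((W_0-1) + \sum_{\ell=1}^L (W_\ell - 1)\big)$ with $W_\ell := p(h_T|\theta_\ell,\pi)/p(h_T|\pi)$. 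Using $-\log Z_L = -(Z_L - 1) + \tfrac12 (Z_L-1)^2 + \dots$ and taking expectations: the linear term contributes $-\E[Z_L - 1] = -\frac{\E[W_0]-1}{L+1} = -\frac{\chi^2 + \text{something}}{L+1}$-type quantity, and the quadratic term contributes $\tfrac12\E[(Z_L-1)^2] = \Theta(1/L)$ coming from the variance of the $L$ i.i.d.\ contrastive terms scaled by $(L+1)^{-2}\cdot L$. Both surviving contributions are $\mathcal{O}(1/L)$ provided the relevant second moment $\E_{p(\theta)}\!\big[(p(h_T|\theta,\pi)/p(h_T|\pi))^2\big]$ (a $\chi^2$-type divergence between $p(h_T|\theta,\pi)$-mixture components) is finite — this finiteness is presumably part of the ``minor technical assumptions.'' Controlling the higher-order/tail terms of the $\log$ expansion uniformly (where $Z_L$ can be small) is handled by splitting on the event $\{Z_L \ge 1/2\}$ versus its complement, the latter having probability decaying in $L$ by a concentration bound, combined with the crude deterministic bound $-\log Z_L \le \log(L+1) - \log W_0$ wait — more carefully, $-\log Z_L \le \log\frac{p(h_T|\pi)}{p(h_T|\theta_0,\pi)/(L+1)} = \log(L+1) + \log(1/W_0)$, whose contribution on the rare event is $o(1/L)$ under a mild moment assumption on $\log W_0$.

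The main obstacle I expect is the rate argument, specifically making the Taylor expansion of $\E[-\log Z_L]$ rigorous: $Z_L$ is a nonnegative random variable that can be arbitrarily close to $0$, so $-\log Z_L$ is not bounded and not smooth near the origin, and a naive expansion is invalid. The clean route is to avoid a pointwise expansion entirely and instead bound $\E\!\left[\log \frac{p(h_T|\pi)}{R_L}\right]$ above and below directly: for the upper bound use $\log x \le x - 1$ applied to $\log(p(h_T|\pi)/R_L)$ after a Jensen step conditioning on $\theta_0$, which already gives an $\mathcal{O}(1/L)$ upper bound on the gap cleanly; for the matching lower bound on the gap (showing it is not $o(1/L)$, i.e.\ the rate is tight, or at least that the stated $\mathcal{O}$ is not improvable in general) one exhibits the leading $\chi^2$ term. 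Since the theorem only claims an $\mathcal{O}(L^{-1})$ \emph{upper} bound on the approximation error, it suffices to carry out the upper-bound half carefully, which reduces to a single application of $\log x \le x-1$ plus computing $\E[R_L \mid \theta_0, h_T]/p(h_T|\pi) = 1 + (W_0 - 1)/(L+1)$ and then $\E_{\theta_0,h_T}[W_0 - 1] = \E_{\theta_0,h_T}\!\big[p(h_T|\theta_0,\pi)/p(h_T|\pi)\big] - 1$, which is a finite constant under the stated integrability assumption — giving the gap $\le \text{const}/(L+1)$ and hence the $\mathcal{O}(L^{-1})$ rate.
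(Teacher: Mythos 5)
Your proposal is correct in substance and, for the two technically hardest parts, coincides with the paper's proof: the limit is obtained exactly as in the paper (strong law of large numbers for $\frac{1}{L+1}\sum_\ell p(h_T|\theta_\ell,\pi)\to p(h_T|\pi)$ plus bounded/dominated convergence under an integrability assumption on $p(h_T|\theta,\pi)/p(h_T|\pi)$), and your final rate argument --- apply $\log x\le x-1$ to the gap and compute $\E[R_L\mid\theta_0,h_T]=\frac{1}{L+1}\bigl(p(h_T|\theta_0,\pi)+L\,p(h_T|\pi)\bigr)$ --- is literally the paper's derivation of the $C/(L+1)$ bound; you are also right to discard the naive Taylor expansion, which the paper likewise avoids for this bound. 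Where you differ is in how nonnegativity of the gap and monotonicity are packaged: the paper rewrites $\mathcal{I}_T(\pi)-\mathcal{L}_T(\pi,L)$ (and $\mathcal{L}_T(\pi,L_2)-\mathcal{L}_T(\pi,L_1)$) as an expected KL divergence between the symmetrized mixture $\tilde p(\theta_{0:L}|h_T)=\frac{1}{L+1}\sum_\ell p(\theta_\ell|h_T)p(\theta_{0:L}^{-\ell})$ and $p(\theta_{0:L})$ and invokes Gibbs' inequality, whereas you propose an incremental $L\to L+1$ argument via conditional Jensen and then deduce the lower bound from monotonicity plus the limit. Your route does work, but the step you wave at (``the averaging weight arranged so that, by a symmetrization/exchangeability argument, $\E[\log R_{L+1}]\le\E[\log R_L]$'') needs to be made explicit: conditional Jensen only gives $\E[\log R_{L+1}]\le \E\bigl[\log\bigl(\tfrac{L+1}{L+2}R_L+\tfrac{1}{L+2}p(h_T|\pi)\bigr)\bigr]$, and to get from there to $\E[\log R_L]$ you need concavity of $t\mapsto\E[\log((1-t)R_L+t\,p(h_T|\pi))]$ together with the identity $\E\bigl[p(h_T|\pi)/R_L\mid h_T\bigr]=1$, which is proved by exactly the index-relabelling symmetrization the paper uses inside its KL representation --- so the two proofs ultimately rest on the same trick. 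One small slip: in your rate paragraph you write the gap as $\E[\log(p(h_T|\pi)/R_L)]$; it is $\E[\log(R_L/p(h_T|\pi))]$ (your closing computation is consistent with the correct orientation).
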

The proof is presented in Appendix~\ref{sec:proofs}.
For evaluation purposes, it is helpful to pair sPCE with an upper bound, which we obtain by using $\gup$ as our estimate of the integrand
\begin{equation}
	\mathcal{U}_T(\pi,L) = \E_{p(\theta_{0},h_T|\pi)p(\theta_{1:L})}\left[ \gup_L(\theta_{0:L},h_T) \right].
\end{equation}
We refer to this bound as sequential Nested Monte Carlo (sNMC). 
Theorem~\ref{thm:seqnmc} in Appendix~\ref{sec:proofs} shows that $\mathcal{U}_T(\pi,L)$ satisfies complementary properties to $\mathcal{L}_T(\pi,L)$.
In particular, $\mathcal{L}_T(\pi,L) \le \mathcal{I}_T(\pi) \le \mathcal{U}_T(\pi,L)$ and both bounds become monotonically
tighter as $L$ increases, becoming exact as $L\to\infty$ at a rate $\mathcal{O}\left(1/L\right)$.  We can thus directly control the trade-off between bias in our objective and 
the computational cost of training. Note that increasing $L$ has no impact on the cost at deployment time.
Critically, as we will see in our experiments, we tend to only need relatively modest values of $L$ for $\mathcal{L}_T(\pi,L)$ to be an effective 
objective. 

If using a sufficiently large $L$ proves problematic (e.g.~our available training time is strictly limited), one can further tighten these
bounds for a fixed $L$ by introducing an amortized proposal, $q(\theta;h_T)$, for the contrastive samples $\theta_{1:L}$, rather than drawing them from the prior, as in \citet{foster2020unified}.
By appropriately adapting $\mathcal{L}_T(\pi,L)$, the proposal and the design network can then be trained simultaneously with a single unified objective, 
in a manner similar to a variational autoencoder~\citep{kingma2014auto}, allowing the bound itself to get tighter during training.
The resulting more general class of bounds are described in detail in Appendix~\ref{sec:app:bounds} and may offer further improvements for the DAD
approach. 
We focus on training with sPCE here in the interest of simplicity of both exposition and implementation.

\subsection{Gradient estimation}
\label{sec:gradientestimation}
The design network parameters $\phi$ can be optimized using a stochastic optimization scheme such as Adam~\citep{kingma2014adam}. Such methods require us to compute unbiased gradient estimates of the sPCE objective~\eqref{eq:sPCE_objective}.  Throughout, we assume that the design space $\Xi$ is continuous. 

We first consider the case when the observation space $\mathcal{Y}$ is also continuous and the likelihood $p(y|\theta,\xi)$ is reparametrizable.  This means that we can introduce random variables $\epsilon_{1:T}\sim p(\epsilon)$, which are independent of $\xi_{1:T}$ and $\theta_{0:L}$, such that $y_t = y(\theta_0,\xi_t,\epsilon_t)$. As we already have that $\xi_t = \pi_{\phi}(h_{t-1})$, we see that $h_{t}$ becomes a deterministic function of $h_{t-1}$ given $\epsilon_t$ and $\theta_0$. Under these assumptions we can take the gradient operator inside the expectation and apply the law of the unconscious statistician to write\footnote{We use $\partial a /\partial b$ and $da/db$ to represent the Jacobian matrices of partial and total derivatives respectively for vectors $a$ and $b$.}
\begin{equation}
	\dydx{\mathcal{L}_T}{\phi}  = \E_{p(\theta_{0:L})p(\epsilon_{1:T})}\left[ \dydx{}{\phi}\glo_L(\theta_{0:L},h_T)  \right].
	\label{eq:lossgrad}
\end{equation}
We can now construct unbiased gradient estimates by sampling from $p(\theta_{0:L})p(\epsilon_{1:T})$ and evaluating, $d \glo_L(\theta_{0:L},h_T)/ d\phi$. This gradient can be easily computed via an automatic differentiation framework \citep{baydin2018ad,pytorch}. 

For the case of discrete observations $y \in \mathcal{Y}$,  first note that given a policy $\pi_\phi$, the only randomness in the history $h_T$ comes from the observations $y_1, \dots, y_T$, since the designs are computed deterministically from past histories. 
One approach to computing the gradient of \eqref{eq:sPCE_objective} in this case is to sum over all possible histories $h_T$, integrating out the variables $y_{1:T}$, and take gradients with respect to $\phi$ to give
\begin{equation}
\dydx{\mathcal{L}_T}{\phi}  = \E\left[ \sum_{h_T} \dydx{}{\phi}  \Big(p(h_T|\theta_0) \glo_L(\theta_{0:L},h_T) \Big) \right],
\label{eq:total_enum}
\end{equation}
where the expectation is over $\theta_{0:L}\sim p(\theta)$. Unbiased gradient estimates can be computed using samples from the prior. Unfortunately, this gradient estimator has a computational cost $\mathcal{O}(|\mathcal{Y}|^T)$ and is therefore only applicable when both the number of experiments $T$ and the number of possible outcomes $|\mathcal{Y}|$ are relatively small. 

To deal with the cases when it is either impractical to enumerate all possible histories,  or $\mathcal{Y}$ is continuous but the  likelihood $\likelihood$ is non-reparametrizable, we propose using the score function gradient estimator,  which is also known as the REINFORCE estimator \citep{Williams1992}. The score function gradient, is given by
\begin{align}
\dydx{\mathcal{L}_T}{\phi}\!=\!\E &\left[ \!\left(  \log \frac{\likelihoodi{0}}{\sum_{\ell=0}^L \likelihoodi{\ell} } \right)\! \ddx{\phi} \log \likelihoodi{0} \right. \nonumber \\
& \quad \left. -\ddx{\phi}  \log  \sum_{\ell=0}^L \likelihoodi{\ell} \right] \label{eq:score_grad}
\end{align}
where the expectation is over $\theta_0,h_T \sim p(\theta,h_T|\pi)$ and $\theta_{1:L}\sim p(\theta)$, and unbiased estimates may again be obtained using samples. This gradient is amenable to the wide range of existing variance reduction methods such as control variates~\citep{tucker2017rebar,mohamed2020monte}. In our experiments, however, we found the standard score function gradient to be sufficiently low variance. For complete derivations of the gradients estimators we use, see Appendix~\ref{sec:app:method}.

\subsection{Architecture}
\label{sec:arch}
Finally, we discuss the deep learning architecture used for $\pi_\phi$. To allow efficient and effective training, we take into account a key permutation invariance of the BOED problem as highlighted by the following result (proved in Appendix~\ref{sec:proofs}). 

\begin{restatable}[Permutation invariance]{theorem}{permutation}
	\label{theorem:invariance}
	Consider a permutation $\sigma \in S_k$ acting on a history $h_k^1$, yielding
	$h_k^2 = (\xi_{\sigma(1)},y_{\sigma(1)}),...,(\xi_{\sigma(k)},y_{\sigma(k)})$.
	For all such $\sigma$, we have
	\begin{align*}
	\E \left[ \sum_{t=1}^{T} I_{h_{t-1}}(\xi_t) \middle| h_k = h^1_k\right]
	\!=
	\E \left[ \sum_{t=1}^{T} I_{h_{t-1}}(\xi_t) \middle| h_k = h^2_k\right]
	\end{align*}
	such that the EIG is unchanged under permutation. Further, the optimal policies starting in $h_k^1$ and $h_k^2$ are the same.
\end{restatable}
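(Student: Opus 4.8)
The whole statement rests on one structural fact, which I would isolate first: the posterior over $\theta$ induced by a history depends on that history only through the \emph{unordered} collection of design--observation pairs. This is immediate from Bayes' rule together with the factorisation~\eqref{eq:jlikelihood}, $p(\theta\mid h_k)\propto p(\theta)\prod_{i=1}^{k}p(y_i\mid\theta,\xi_i)$: the right-hand side is invariant under any permutation $\sigma\in S_k$ of its factors, so $p(\theta\mid h_k^1)=p(\theta\mid h_k^2)$, and the same formula shows the marginal $p(h_k)$ is unchanged.

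From here I would draw two consequences. First, the per-step EIG $I_{h_{t-1}}(\xi)$ of~\eqref{eq:reward} is an explicit functional of the pair $\bigl(p(\theta\mid h_{t-1}),\,\xi\bigr)$ alone --- the predictive $p(y\mid h_{t-1},\xi)$ appearing there is itself $\E_{p(\theta\mid h_{t-1})}[p(y\mid\theta,\xi)]$ --- so two histories with equal posteriors assign the same value of $I_\cdot(\xi)$ to every candidate next design. Second, the conditional generative law of the remaining trajectory given $h_k$, namely $p(\theta\mid h_k)\prod_{t=k+1}^{T}p(y_t\mid\theta,\xi_t)$ with $\xi_t$ read off the growing prefix, is the same whether we start from $h_k^1$ or $h_k^2$; hence under any fixed continuation rule the future observations and the future per-step EIGs $I_{h_{t-1}}(\xi_t)$, $t>k$, are identically distributed.

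For the equality of expected information gains I would split the sum at step $k$. The contribution of steps $k+1,\dots,T$ is precisely the total expected information gain of the ``restarted'' design problem with prior $p(\theta\mid h_k)$ and horizon $T-k$; applying Theorem~\ref{proposition:terminal} to that problem rewrites it as $\text{H}[p(\theta\mid h_k)]-\E[\text{H}[p(\theta\mid h_T)]\mid h_k]$, which depends on $h_k$ only through $p(\theta\mid h_k)$ and is therefore unchanged under $\sigma$. The contribution of the first $k$ summands is the information already carried by $h_k$; using the chain-rule/telescoping identity that underlies Theorem~\ref{proposition:terminal} it reorganises, once the appropriate expectation is taken, into the order-independent quantity $\text{H}[p(\theta)]-\text{H}[p(\theta\mid h_k)]$. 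Adding the two pieces and invoking the posterior-invariance observation yields the asserted equality. For the final sentence, the problem of choosing the remaining designs $\xi_{k+1},\dots,\xi_T$ optimally starting from $h_k$ is an optimisation whose only inputs are the posterior $p(\theta\mid h_k)$, the likelihood, and the residual budget $T-k$; all three agree for $h_k^1$ and $h_k^2$, so the maximisers agree, and in particular the optimal next design coincides, i.e.\ $\pi^\ast(h_k^1)=\pi^\ast(h_k^2)$.

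The step I expect to be delicate is the treatment of the first $k$ summands: individually the partial EIGs $I_{h_{t-1}}(\xi_t)$ for $t\le k$ are \emph{not} permutation-invariant --- the intermediate posteriors $p(\theta\mid h_{t-1})$ genuinely depend on the order in which the observations arrived --- so the invariance of their aggregate has to be extracted via the chain rule of mutual information rather than term by term. By contrast the statement about $\pi^\ast$, which is what the DAD architecture actually exploits, needs only the clean posterior-invariance observation and follows essentially immediately from it.
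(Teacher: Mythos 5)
Your proposal is correct and follows essentially the same route as the paper's proof: the key lemma in both is that $p(\theta\mid h_k)$ depends on $h_k$ only through the unordered set of design--observation pairs, the conditional EIG is then reduced via the telescoping identity of Theorem~\ref{proposition:terminal} to entropy terms that are functionals of $p(\theta\mid h_k)$ alone, and the optimal-policy claim follows from posterior sufficiency. The only (presentational) differences are that the paper proves the policy statement by an explicit backward induction with a permuted continuation policy $\pi'$ where you argue directly from the fact that the residual optimisation problem is determined by $(p(\theta\mid h_k),\,\text{likelihood},\,T-k)$, and that the paper asserts the conditional telescoping for the first $k$ summands without the caveat you rightly flag as the delicate step.
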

This permutation invariance is an important and well-studied property of many machine learning problems \citep{bloem2019probabilistic}.
The knowledge that a system exhibits permutation invariance can be exploited in neural architecture design to enable significant \emph{weight sharing}. 
One common approach is pooling~\citep{edwards2016towards,zaheer2017deep,garnelo2018conditional,garnelo2018neural}.   
This involves summing or otherwise combining representations of multiple inputs into a single representation that is invariant to their order.

Using this idea, we represent the history $h_t$ with a fixed dimensional representation that is formed by pooling representations of the distinct design-outcome pairs of the history
\begin{align}
	\label{eq:defnofr}
	R(h_t) :=  \sum\nolimits_{k=1}^t E_{\phi_1}(\xi_k,y_k),
\end{align}
where $E_{\phi_1}$ is a neural network \emph{encoder} with parameters $\phi_1$ to be learned. Note that this pooled representation is the same if we reorder the labels $1,...,t$. By convention, the sum of an empty sequence is 0. 

We then construct our design network 
 to make decisions based on the pooled representation $R(h_t)$ by setting $\pi_\phi(h_t) = F_{\phi_2}(R(h_t))$, where $F_{\phi_2}$ is a learned \emph{emitter} network. The trainable parameters  are $\phi=\{\phi_1, \phi_2\}$. By combining simple networks in a way that is sensitive to the permutation invariance of the problem, we facilitate parameter sharing in which the network $E_{\phi_1}$ is re-used for each input pair and for each time step $t$.  This results in significantly improved performance compared to networks that are forced to \emph{learn} the relevant symmetries of the problem. 


\section{Related Work}
\label{sec:relwork}
Existing approaches to sequential BOED typically follow the path outlined in \S~\ref{sec:conventional}. The posterior inference performed at each stage of the conventional approach has been done using sequential Monte Carlo \citep{del2006sequential,drovandi2014sequential}, population Monte Carlo \citep{rainforth2017thesis}, variational inference \citep{foster2019variational,foster2020unified}, and Laplace approximation \citep{lewi2009sequential,long2013}.  

The estimation of the mutual information objective at each step has been performed by nested Monte Carlo \citep{myung2013,vincent2017}, variational bounds \citep{foster2019variational,foster2020unified}, Laplace approximation~\citep{lewi2009sequential}, ratio estimation \citep{kleinegesse2020sequential}, and hybrid methods \citep{senarathne2020laplace}.  The optimization over designs has been performed by Bayesian optimization \citep{foster2019variational,kleinegesse2020sequential}, interacting particle systems \citep{amzal2006bayesian}, simulated annealing \citep{muller2005simulation}, utilizing regret bounds \citep{zheng2020sequential}, or bandit methods \citep{rainforth2017thesis}.

There are approaches that simultaneously estimate the mutual information and optimize it, using a single stochastic gradient procedure. 
Examples include perturbation analysis \citep{huan2014gradient}, variational lower bounds \citep{foster2020unified}, or multi-level Monte Carlo \citep{goda2020unbiased}. 

Some recent work has focused specifically on models with intractable likelihoods \citep{Hainy2016likelihoodfree,kleinegesse2020mine, kleinegesse2020sequential}. Other work has sought to learn a non-myopic strategy focusing on specific tractable cases \citep{huan2016sequential,jiang2020binoculars}.


\section{Experiments}
\label{sec:experiments}

\begin{figure}[t]
	\centering
	\includegraphics[width=0.46\textwidth]{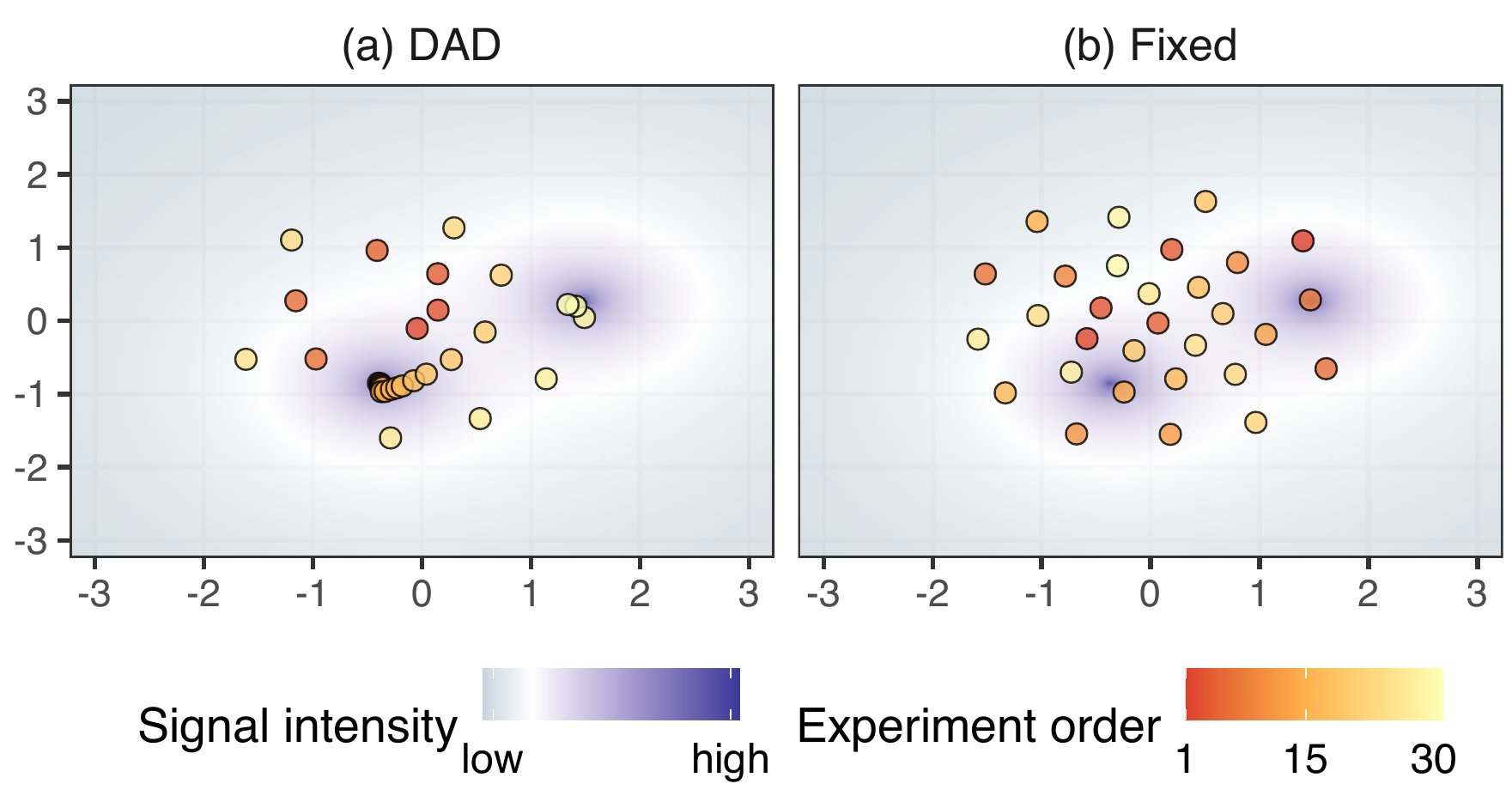}
\vspace{-5pt}
	\caption{An example of the designs learnt by (a) the DAD network and (b) the fixed baseline for a given $\theta$ sampled from the prior.}
	\label{fig:locfin_experiment_viz}
\end{figure}

We now compare \textbf{DAD} to a number of baselines across a range of experimental design problems.  
We implement DAD by extending PyTorch~\citep{pytorch} and Pyro~\citep{pyro} to provide an implementation that is abstracted from the specific problem. 
Code is publicly available at {\small \url{https://github.com/ae-foster/dad}}.

As our aim is to adapt designs in \emph{real-time}, we primarily compare to strategies that are fast at deployment time.  The simplest baseline is \textbf{random} design, which selects designs uniformly at random.
The \textbf{fixed} baseline completely ignores the opportunity for adaptation and uses static design to learn a fixed $\xi_1,...,\xi_T$ before the experiment.
We use the SG-BOED approach of \citet{foster2020unified} with the PCE bound to optimize the fixed design $\xi_{1:T}$.
We also compare to tailor-made heuristics for particular models as appropriate. 

Similarly to the notion of the amortization gap in amortized inference \citep{cremer2018suboptimality}, one might initially expect to a drop in performance of DAD compared to conventional (non-amortized) BOED methods that use the traditional iterative approach of \S~\ref{sec:conventional}. To assess this we also consider using the SG-BOED approach of~\citet{foster2020unified} in a traditional iterative manner to approximate $\pi_s$, referring to this as the \textbf{variational} baseline, noting this requires significant run-time computation.
We also look at several iterative BOED baselines that are specifically tailored to the examples that we choose \citep{vincent2017,kleinegesse2020sequential}.  Perhaps surprisingly, we find that DAD is not only competitive compared to these non-amortized methods, but often outperforms them. This is discussed in \S~\ref{sec:discussion}.

The first performance metric that we focus on is total EIG, $\mathcal{I}_T(\pi)$.  When no direct estimate of $\mathcal{I}_T(\pi)$ is available, we estimate both the sPCE lower bound and sNMC upper bound.  We also present the standard error to indicate how the performance varies between different experiment realizations (rollouts).  We further consider the deployment time (i.e.~the time to run the experiment itself, after pre-training); a critical metric for our aims. Full experiment details are given in Appendix~\ref{sec:exp-appendix}.


\subsection{Location finding in 2D}\label{sec:experiments_locfin}
\begin{table}[t]
	\begin{tabular}{lrr}
		Method  &  Lower bound, $\mathcal{L}_{30}$ &  Upper bound, $\mathcal{U}_{30}$   \\
		\hline
		Random         & 8.303 $\pm$ 0.043 &  8.322 $\pm$ 0.045 \\
		Fixed              & 8.838 $\pm$ 0.039 &   8.914 $\pm$ 0.038  \\
		\textbf{DAD}  & \textbf{10.926} $\pm$ \textbf{0.036}  & \textbf{12.382} $\pm$ \textbf{0.095}     \\
		\hline
		Variational     & 8.776 $\pm$ 0.143 &  9.064 $\pm$ 0.187
	\end{tabular}
	\vspace{-4pt}
	\caption{Upper and lower bounds on the total EIG, $\mathcal{I}_{30}(\pi)$, for the location finding experiment.  Errors indicate $\pm1$ s.e.~estimated over 256 (variational) or 2048 (others) rollouts.
		\vspace{-8pt}
		}
	\label{tab:locfin_T30_results}
\end{table}

\begin{figure}[t]
	\centering
	\includegraphics[width=0.46\textwidth]{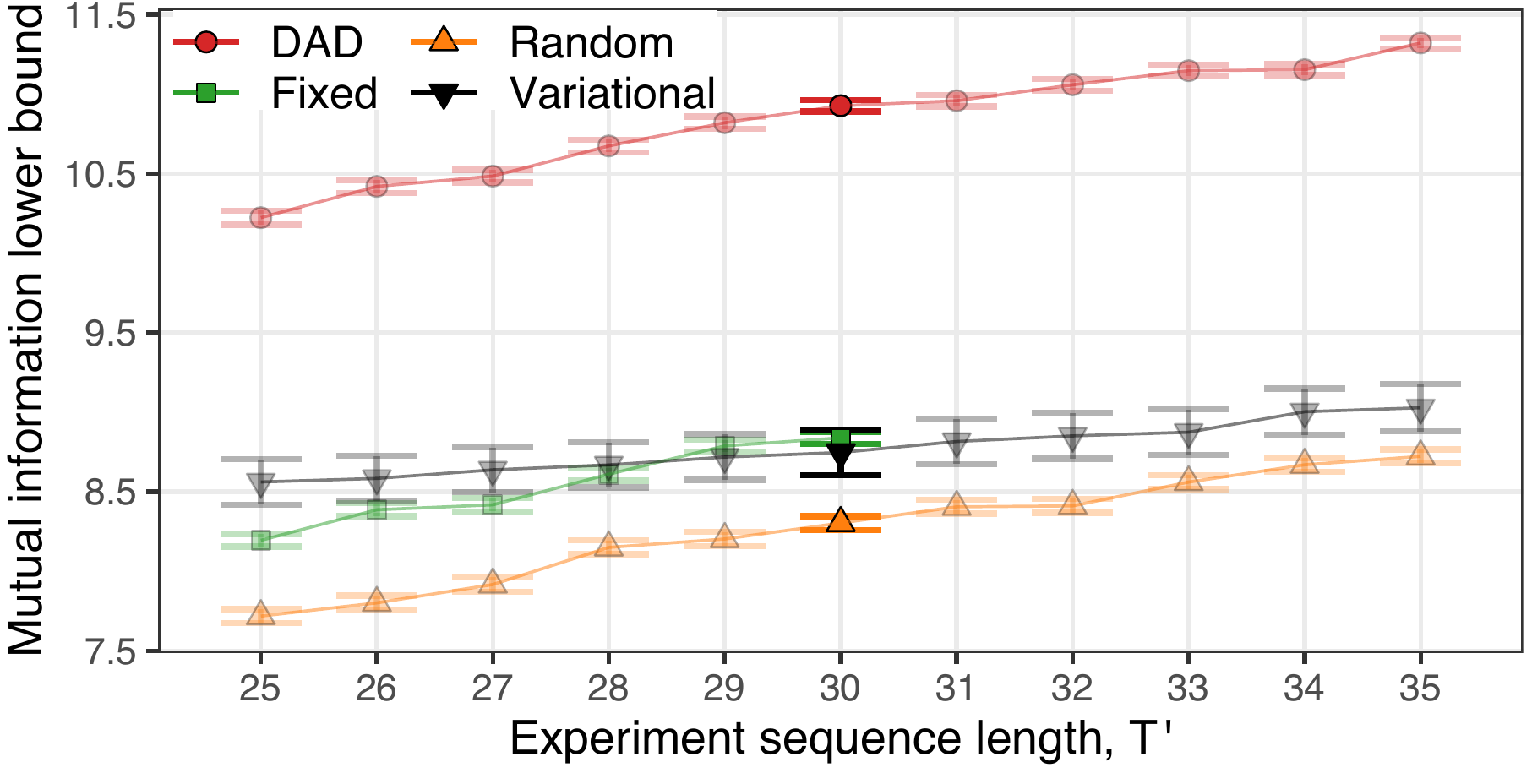}
	\vspace{-5pt}
	\caption{Generalizing sequence length for the location finding experiment.
		The DAD network and the fixed strategy were trained to perform $T=30$ experiments, whilst other strategies do not require pre-training.
		The fixed strategy cannot be generalized to sequences longer than its training regime.
		We present sPCE estimates with error bars computed as in Table~\ref{tab:locfin_T30_results}.
	}
	\label{fig:locfin_2D_2S}
\end{figure}

Inspired by the acoustic energy attenuation model of~\citet{Sheng2005}, we consider the problem of finding the locations of multiple hidden sources which each emits a signal whose intensity attenuates according to the inverse-square law.
The \emph{total intensity} is a superposition of these signals. The design problem is to choose where to make observations of the total signal to learn the locations of the sources.

We train a DAD network to perform $T=30$ experiments with $K=2$ sources. The designs learned by DAD are visualized in Figure \ref{fig:locfin_experiment_viz}(a). Here our network learns a complex strategy that initially explores in a spiral pattern. 
Once it detects a strong signal, multiple experiments are performed close together to refine knowledge of that location (note the high density of evaluations near the sources).
The fixed design strategy, displayed in Figure \ref{fig:locfin_experiment_viz}(b) must choose all design locations up front, leading to an evenly dispersed strategy that cannot ``hone in'' on the critical areas, thus gathering less information. 

Table \ref{tab:locfin_T30_results} reports upper and lower bounds on $\mathcal{I}_T(\pi)$ for each strategy and confirms that DAD significantly outperforms all the considered baselines.
DAD is also orders of magnitude faster to deploy than the  variational baseline, the other adaptive method, with DAD taking $0.0474\pm 0.0003$ secs to make all 30 design decisions on a lightweight CPU, compared to $8963$ secs for the variational method.

\paragraph{Varying the Design Horizon} In practical situations the exact number of experiments to perform may be unknown.  Figure  \ref{fig:locfin_2D_2S} indicates that our DAD network that is pretrained to perform $T=30$ experiments can generalize well to perform $T'\neq30$ experiments at deployment time,  still outperforming the baselines,  indicating that DAD is robust to the length of training sequences. 

\paragraph{Training Stability} To assess the stability between different training runs, we trained 16 different DAD networks. Computing the mean and standard error of the lower bound on $\mathcal{I}_T(\pi)$ over these 16 runs gave $10.91 \pm 0.014$, and the matching upper bounds were $12.47 \pm 0.046$. We see that the variance across different training seeds is modest, indicating that DAD reaches designs of a similar quality each time. Comparing with Table~\ref{tab:locfin_T30_results}, we see that the natural variability across rollouts (i.e.~different $\theta$) with a single DAD network tends to be larger than the variance between the average performance of different DAD networks.


\begin{table}[t]
	\centering
	\begin{tabular}{lr}
		Method & Deployment time (s) \\
		\hline
		\citet{frye2016measuring} & 0.0902 $\pm$ 0.0003 \\
		\citet{kirby2009one} & N/A \\
		Fixed & N/A \\
		DAD & 0.0901 $\pm$ 0.0007 \\
		\hline
		Badapted & 25.2679 $\pm$ 0.1854 
	\end{tabular}
	\caption{Deployment times for Hyperbolic Temporal Discounting methods.
		We present the total design time for $T=20$ questions, taking the mean and $\pm1$ s.e.~over 10 realizations. 
		Tests were conducted on a lightweight CPU (see Appendix~\ref{sec:exp-appendix}).
	\vspace{-3pt}}
	\label{tab:hyperbolic_time}
\end{table}

\begin{table}[t]
	\centering	
	\begin{tabular}{lrr}
		Method & Lower bound & Upper bound \\
		\hline
		\citet{frye2016measuring} & 3.500 $\pm$ 0.029 & 3.513 $\pm$ 0.029 \\
		\citet{kirby2009one} & 1.861 $\pm$ 0.008 & 1.864 $\pm$ 0.009 \\
		Fixed & 2.518 $\pm$ 0.007 & 2.524 $\pm$ 0.007 \\
		\textbf{DAD} & \textbf{5.021 $\pm$ 0.013} & \textbf{5.123 $\pm$ 0.015}\\
		\hline
		Badapted &  4.454 $\pm$ 0.016 & 4.536 $\pm$ 0.018
	\end{tabular}
	
	\caption{Final lower and upper bounds on the total information $\mathcal{I}_T(\pi)$ for the Hyperbolic Temporal Discounting experiment. The bounds are finite sample estimates of $\mathcal{L}_T(\pi,L)$ and $\mathcal{U}_T(\pi,L)$ with $L=5000$. The errors indicate $\pm1$ s.e.~over the sampled histories.
    \vspace{-8pt}}
	\label{tab:hyperbolic_eig}
\end{table}

\begin{figure}[t]
	\centering
	\includegraphics[width=0.46\textwidth]{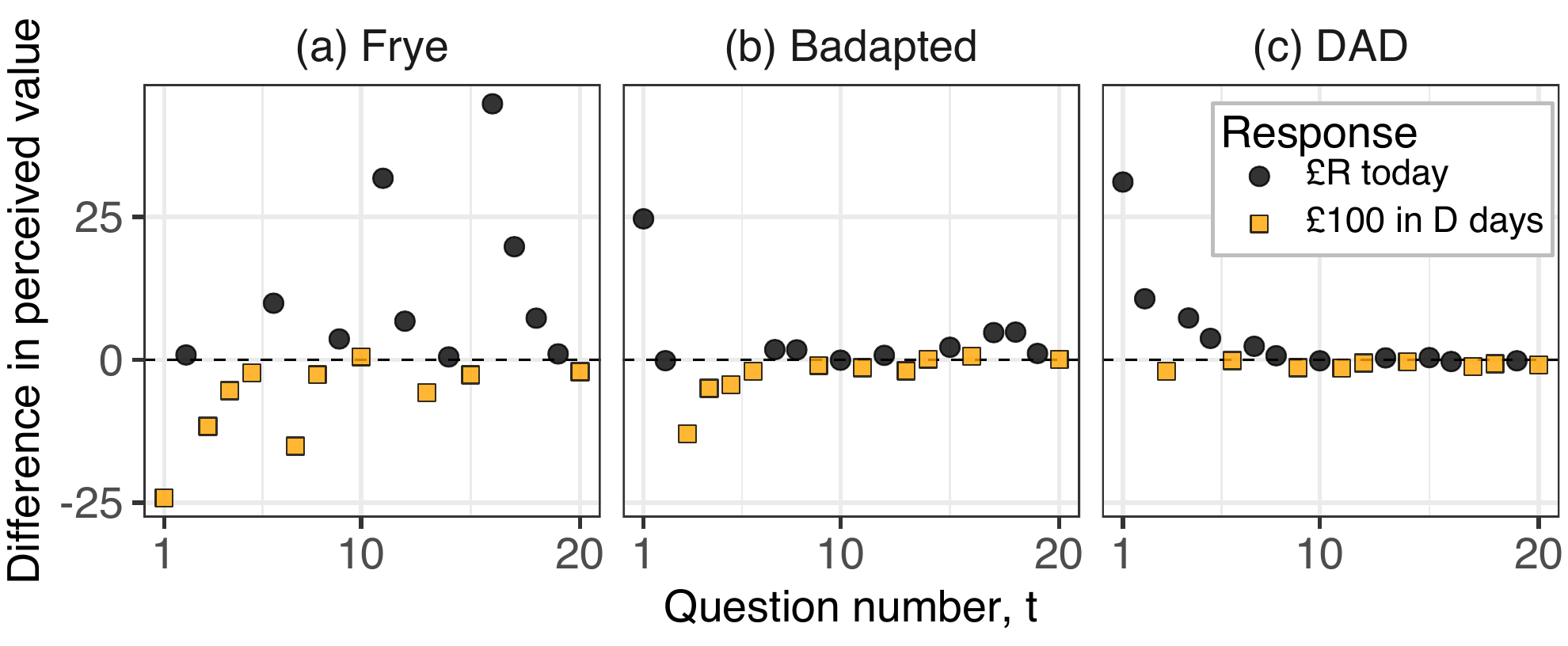}
	\vspace{-5pt}
	\caption{ 
	An example of the designs learnt by two of the problem-specific baselines and DAD. We plot the difference in perceived value of the two propositions ``\textsterling$R$ today'' and ``\textsterling100 in $D$ days'' for a certain participant, represented by a specific value of the latent variable $\theta$. A difference of 0 indicates that the participant is indifferent between the two offers.
	\vspace{-6pt}
	}
	\label{fig:hyperbolic_vis}
\end{figure}

\subsection{Hyperbolic temporal discounting}
\label{sec:hyperbolic}
In psychology, temporal discounting is the phenomenon that the utility people attribute to a reward typically decreases the longer they have to wait to receive it \citep{critchfield2001temporal,green2004discounting}.
For example, a participant might be willing to trade \textsterling90 today for \textsterling100 in a month's time, but not for \textsterling100 in a year.
A common parametric model for temporal discounting in humans is the hyperbolic model \citep{mazur1987adjusting}; we study a specific form of this model proposed by \citet{vincent2016hierarchical}. 

We design a sequence of $T=20$ experiments, each taking the form of a binary question ``Would you prefer \textsterling$R$ today, or \textsterling100 in $D$ days?'' with design $\xi = (R,D)$ that must be chosen at each stage.  As real applications of this model would involve human participants, the available time to choose designs is strictly limited. We consider DAD, the aforementioned fixed design policy, and strategies that have been used specifically for experiments of this kind: \citet{kirby2009one}, a human constructed fixed set of designs; \citet{frye2016measuring}, a problem-specific adaptive strategy; and \citet{vincent2017}, a partially customized sequential BOED method, called Badapted, that uses population Monte Carlo~\citep{cappe2004population} to approximate the posterior distribution at each step and a bandit approach to optimize the EIG over possible designs. 

We begin by investigating the time required to deploy each of these methods. As shown in Table~\ref{tab:hyperbolic_time}, the non-amortized Badapted method takes the longest time, while for DAD, the total deployment time is less than $0.1$ seconds---totally imperceptible to a participant. 

Table~\ref{tab:hyperbolic_eig} shows the performance of each method. We see that DAD performs best, surpassing bespoke design methods that have been proposed for this problem, including Badapted which has a considerably larger computation budget.  Figure \ref{fig:hyperbolic_vis} demonstrates how the designs learnt by DAD compare qualitatively with the two most competitive problem-specific baselines.
As with Badapted, DAD designs rapidly cluster near the indifference point.

This experiment demonstrates that DAD can successfully amortize the process of experimental design in a real application setting. It outperforms some of the most successful non-amortized and highly problem-specific approaches with a fraction of the cost during the real experiment.   


\subsection{Death process}
\label{sec:death}
We conclude with an example from epidemiology \citep{cook2008optimal} in which healthy individuals become infected at rate $\theta$. The design problem  is to choose observations times $\xi > 0$ at which to observe the number of infected individuals: we select $T=4$ designs sequentially with an independent stochastic process observed at each iteration. We compare to our fixed and variational baselines, along with the adaptive SeqBED approach of~\citet{kleinegesse2020sequential}.

First, we examine the compute time required to deploy each method for a single run of the sequential experiment. The times illustrated in Table~\ref{tab:death_eig} show that the adaptive strategy learned by DAD can be deployed in under $0.01$ seconds, many orders of magnitude faster than the non-amortized methods, with SeqBED taking hours for \emph{one} rollout.

Next, we estimate the objective $\mathcal{I}_T(\pi)$  by averaging the information gain over simulated rollouts. The results in  Table~\ref{tab:death_eig} reveal that DAD designs are superior to both fixed design and variational adaptive design, tending to uncover more information about the latent $\theta$ across many possible experimental trajectories.
For comparison with SeqBED, we were unable to perform sufficient rollouts to obtain a high quality estimate of $\mathcal{I}_T(\pi)$. Instead, we conducted a single rollout of each method with $\theta=1.5$ fixed. The resulting information gains for this one rollout were: 1.590 (SeqBED), 1.719 (Variational), 1.678 (Fixed),  \textbf{1.779 (DAD)}.

\begin{table}[t]
	\centering
	\begin{tabular}{lrr}
		Method & Deployment time (s) & $\mathcal{I}_T(\pi)$  \\
		\hline
		Fixed & N/A 
		& 2.023 $\pm$ 0.007  \\
		\textbf{DAD} & 0.0051 $\pm$ 12\% &  \textbf{2.113} $\pm$ \textbf{0.008} \\
		\hline
		Variational & 1935.0\phantom{000} $\pm$ \phantom{0}2\% & 2.076 $\pm$ 0.034  \\
		SeqBED* & 25911.0\phantom{000} \phantom{$\pm$ 04\%} & 1.590 \phantom{$\pm$ 0.034} 
	\end{tabular}
	\caption{Total EIG $\mathcal{I}_T(\pi)$ and deployment times for the Death Process. We present the EIG $\pm1$ s.e. over 10,000 rollouts (fixed and DAD), 500 rollouts (variational) or *1 rollout (SeqBED). The IG can be efficiently evaluated in this case (see Appendix~\ref{sec:exp-appendix}). Runtimes computed as per Table~\ref{tab:hyperbolic_time}. 
	\vspace{-5pt}}
	\label{tab:death_eig}
\end{table}


\vspace{-2pt}
\section{Discussion}\label{sec:discussion}
\vspace{-2pt}

In this paper we introduced DAD---a new method utilizing the power of deep learning to amortize the cost of sequential BOED and allow adaptive experiments to be run in real time. In all experiments DAD performed significantly better than baselines with a comparable deployment time.  Further, DAD showed competitive performance against conventional BOED approaches that do not use amortization, but make costly computations at each stage of the experiment.

Surprisingly, we found DAD was often able to outperform these non-amortized approaches despite using a tiny fraction of the resources at deployment time. We suggest two reasons for this. Firstly, conventional methods must approximate the posterior $p(\theta|h_t)$ at each stage.
If this approximation is poor, the resulting design optimization will yield poor results regardless of the EIG optimization approach chosen.
Careful tuning of the posterior approximation could alleviate this, but would increase computational time further and it is
difficult to do this in the required automated manner. DAD sidesteps this problem altogether by eliminating the need for directly approximating a posterior distribution.

Secondly, the policy learnt by DAD has the potential to be \emph{non-myopic}: it does not choose a design that is optimal for the current experiment in isolation, but takes into account the fact that there are more experiments to be performed in the future. 
We can see this in practice in a simple experiment using the location finding example with one source in 1D with prior $\theta\sim N(0,1)$ and with $T=2$ steps. This setting is simple enough to compute the \emph{exact} one-step optimal design via numerical integration. 
Figure \ref{fig:locfin_T2_comparison} [Left] shows the design function learnt by DAD alongside the exact optimal myopic design. The optimal myopic strategy for $t=1$ is to sample at the prior mean $\xi_1=0$.  At time $t=2$ the myopic strategy  selects a positive or negative design with equal probability.  In contrast,  the policy learnt by DAD is to sample at $\xi_1\approx-0.4$, which does not optimize the EIG for $T=1$ in isolation, but leads to a better \emph{overall} design strategy that focuses on searching the positive regime $\xi_2>\xi_1$ in the second experiment. Figure~\ref{fig:locfin_T2_comparison} [Right] confirms that the policy learned by DAD achieves higher total EIG from the two step experiment than the \emph{exact} myopic approach.

\paragraph{Limitations and Future Work} The present form of DAD still possesses some restrictions that future work might look to address.
Firstly, it requires the likelihood model to be \emph{explicit}, i.e.~that we can evaluate the density $p(y_t|\theta,\xi_t)$.
Secondly, it requires the experiments to be conditionally independent given $\theta$, i.e. $p(y_{1:T}|\theta,\xi_{1:T})=\prod_{t=1}^{T} p(y_t|\theta,\xi_t)$, which may not be the case for, e.g.~time series models.
Thirdly, it requires the designs themselves, $\xi_t$, to be continuous to allow for gradient-based optimization.
On another note, DAD's use of a policy to make design decisions establishes a critical link between experimental design and model-based reinforcement learning~\citep{sekar2020planning}.
Though DAD is distinct in several important ways (e.g.~the lack of observed rewards), investigating these links further might provide an interesting avenue for future work.

\begin{figure}[t]
	\centering
	\begin{subfigure}[b]{0.48\linewidth}
		\centering\includegraphics[width=1\textwidth]{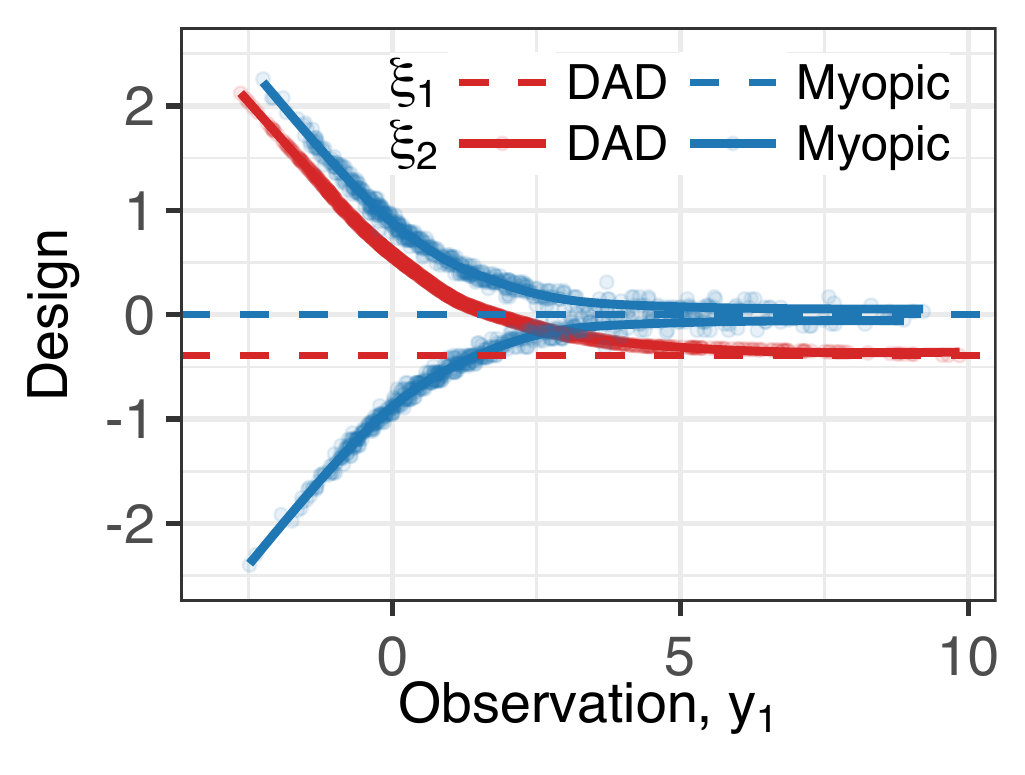}
	\end{subfigure}
	\begin{subfigure}[b]{0.48\linewidth}
		\centering\includegraphics[width=1\textwidth]{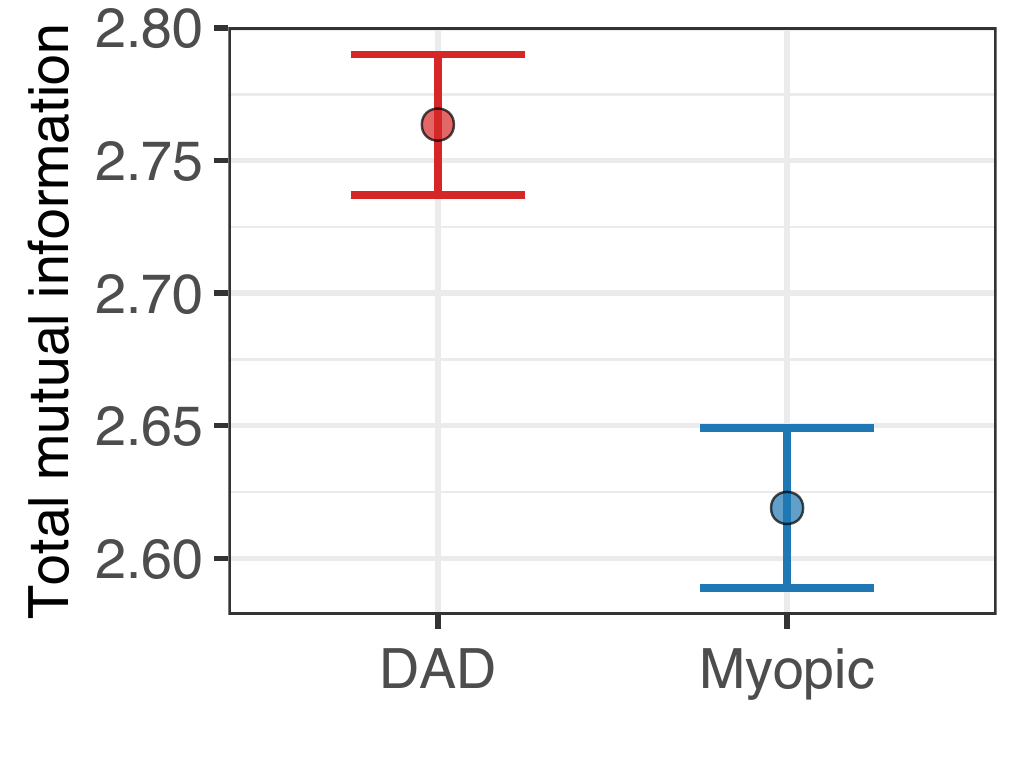}
	\end{subfigure}
	\vspace{-6pt}
	\caption{1D location finding with 1 source, $T=2$.   [Left] the design function, dashed lines correspond to the first design $\xi_1$, which is independent of $y_1$. [Right] $\mathcal{I}_2(\pi)$, the total EIG $\pm1$ s.e.
	\vspace{-8pt}}
	\label{fig:locfin_T2_comparison}
\end{figure}

\paragraph{Conclusions} DAD represents a new conception of adaptive experimentation that focuses on learning a design \emph{policy} network offline, then deploying it during the live experiment to quickly make adaptive design decisions.
This marks a departure from the well-worn path of myopic adaptive BOED (Sec.~\ref{sec:background}), eliminating the need to estimate intermediate posterior distributions or optimize over designs during the live experiment itself; it represents the first approach to allow adaptive BOED to be run in real-time for general problems.
As such, we believe it
may be beneficial to practitioners in a number of fields, from online surveys to clinical trials.


\section*{Acknowledgements}

AF gratefully acknowledges funding from EPSRC grant no.~EP/N509711/1. DRI is supported by EPSRC through the Modern Statistics and Statistical Machine Learning (StatML) CDT programme, grant no.~EP/S023151/1.
AF would like to thank Martin Jankowiak and Adam Golinski for helpful discussions about amortizing BOED.

\bibliographystyle{icml2021}
\bibliography{ms}

\clearpage

\appendix

\onecolumn


\section{Proofs}
\label{sec:proofs}
We begin by showing that $\glo_L(\theta_{0:L},h_T)$  from equation \eqref{eq:gl} is bounded by $\log(L+1)$ and that $\gup_L(\theta_{0:L},h_T)$ from equation \eqref{eq:fl} can potentially be unbounded. For the former
\begin{align}
\glo_L(\theta_{0:L},h_T)& = \log \frac{p(h_T|\theta_0, \pi)}{\frac{1}{L+1}\sum_{\ell=0}^L p(h_T|\theta_\ell, \pi)} \\
&= \log \frac{p(h_T|\theta_0, \pi)}{p(h_T|\theta_0, \pi) + \sum_{\ell =1}^L p(h_T|\theta_\ell, \pi)  }+ \log (L+1) \\
&\leq \log( 1 ) + \log (L+1).
\end{align}
For the latter we have
\begin{align*}
\gup_L(\theta_{0:L},h_T)=\log \frac{p(h_T|\theta_0, \pi)}{\frac{1}{L}\sum_{\ell=1}^L p(h_T|\theta_\ell, \pi)} &\to +\infty \textup{ as } \max_{1\le \ell \le L} p(h_T|\theta_\ell,\pi)\to 0 \textup{ with } p(h_T|\theta_0,\pi) \textup{ held constant.}
\end{align*}

Next we present proofs for all Theorems in the main paper, with each restated for convenience.
\terminal*
\begin{proof}
We begin by rewriting $I_{h_{t-1}}$ in terms of the information gain. This closely mimics the development that we presented in Section~\ref{sec:background}. By repeated appplication of Bayes Theorem we have
\begin{align}
	I_{h_{t-1}}(\xi_t) &= \E_{p(\theta|h_{t-1})p(y_t|\theta,\xi_t)}\left[ \log \frac{p(y_t|\theta,\xi_t)}{p(y_t|h_{t-1},\xi_t)} \right] \\
	&=\E_{p(\theta|h_{t-1})p(y_t|\theta,\xi_t)}\left[ \log \frac{p(\theta|h_{t-1})p(y_t|\theta,\xi_t)}{p(\theta|h_{t-1})p(y_t|h_{t-1},\xi_t)} \right] \\
	&=\E_{p(\theta|h_{t-1})p(y_t|\theta,\xi_t)}\left[ \log \frac{p(\theta|h_{t-1},\xi_t,y_t)}{p(\theta|h_{t-1})} \right] \\
	&=\E_{p(\theta|h_{t-1})}\left[ - \log p(\theta|h_{t-1}) \right] + \E_{p(y_t,\theta|\xi_t,h_{t-1})}\left[ \log p(\theta|h_{t-1},\xi_t,y_t) \right] \\
	&=\E_{p(\theta|h_{t-1})}\left[ - \log p(\theta|h_{t-1}) \right] + \E_{p(y_t|\xi_t,h_{t-1})p(\theta|h_{t-1},\xi_t,y_t)}\left[ \log p(\theta|h_{t-1},\xi_t,y_t) \right] \\
	&=\E_{p(y_t|\xi_t,h_{t-1})}\left[ \,H[\,p(\theta|h_{t-1})\,] - H[\,p(\theta|h_{t-1},\xi_t,y_t)\,]\, \right].
\end{align}
Now noting that each $I_{h_{t-1}}(\xi_t)$ is completely determined by $h_{t-1}$ and $\pi$ (in particular noting that $\xi_t$ is deterministic given these, while $\theta$ is already marginalized out in each $I_{h_{t-1}}(\xi_t)$), we can write
\begin{align}
	\mathcal{I}_T(\pi) &= 
	 \E_{p(h_T|\pi)}\left[ \sum_{t=1}^T I_{h_{t-1}}(\xi_t) \right] \\
	 &= \sum_{t=1}^T  \E_{p(h_{t-1}|\pi)}\left[ I_{h_{t-1}}(\xi_t) \right] 
\intertext{and substituting in our earlier formulation for $I_{h_{t-1}}(\xi_t)$}
	&= \sum_{t=1}^T \E_{p(h_{t-1}|\pi)}\left[ \E_{p(y_t|\xi_t,h_{t-1})}\left[ \,H[\,p(\theta|h_{t-1})\,] - H[\,p(\theta|h_{t-1},\xi_t,y_t)\,]\, \right] \right].
\displaybreak[0]
\intertext{We now observe that we can write $h_t = h_{t-1} \cup \{(\xi_t,y_t)\}$, which allows us to rewrite this as}
	&= \sum_{t=1}^T \E_{p(h_{t}|\pi)}\left[  \,H[\,p(\theta|h_{t-1})\,] - H[\,p(\theta|h_{t})\,]\, \right] \\
	&=\sum_{t=1}^T \E_{p(h_{T}|\pi)}\left[  \,H[\,p(\theta|h_{t-1})\,] - H[\,p(\theta|h_{t})\,]\, \right] \\
	&= \E_{p(h_T|\pi)}\left[ \sum_{t=1}^T H[\,p(\theta|h_{t-1})\,] - H[\,p(\theta|h_{t})\,] \right]\\
	&=\E_{p(h_T|\pi)}\left[ H[\,p(\theta)\,] - H[\,p(\theta|h_{T})\,] \right],
\displaybreak[0]
\intertext{where the last line follows from the fact that we have a telescopic sum. To complete the proof, we rearrange this as}
	&= \E_{p(\theta,h_T|\pi)}\left[ \log p(\theta|h_T) - \log p(\theta) \right] \\
	&= \E_{p(\theta)p(h_T|\theta,\pi)}\left[ \log \frac{p(\theta)p(h_T|\theta,\pi)}{p(h_T|\pi)} - \log p(\theta) \right] \\
	&= \E_{p(\theta)p(h_T|\theta,\pi)}\left[ \log p(h_T|\theta,\pi) - \log p(h_T|\pi) \right]
\end{align}
as required.
\end{proof}

\seqace*
\begin{proof}
We first show that $ \mathcal{L}_T(\pi,L)$ is a lower bound on $\mathcal{I}_T(\pi)$:
\begin{align}
 \mathcal{I}_T(\pi) - \mathcal{L}_T(\pi,L) &= \E_{p(\theta_0, h_T| \pi)}\left[ \log \frac{p(h_T|\theta_0,\pi)}{p(h_T|\pi)} \right]  - \E_{p(\theta_0, h_T|\pi)} \E_{p(\theta_{1:L})} \left[ \log \frac{p(h_T|\theta_0,\pi)}{\frac{1}{L+1}\sum_{\ell=0}^L p(h_T|\theta_\ell,\pi) } \right] \\
&= \E_{p(\theta_0, h_T|\pi)} \E_{p(\theta_{1:L})} \left[ \log \frac{\frac{1}{L+1} \sum_{\ell=0}^Lp(h_T|\theta_\ell, \pi)}{p(h_T|\pi)} \right] \label{eq:thm2_diff} \\
&= \E_{p(\theta_0, h_T|\pi)} \E_{p(\theta_{1:L})} \left[ \log \left(\frac{1}{L+1}\sum_{\ell=0}^L \frac{p(\theta_\ell|h_T)}{p(\theta_\ell)} \right)\right] \\
\intertext{now introducing the shorthand $p(\theta_{0:L}^{-\ell}):=p\left(\theta_{0:L\backslash\{\ell\}}\right)=\prod_{j=0,j\neq \ell}^L p(\theta_j)$,}
 &= \E_{p(\theta_0, h_T|\pi)} \E_{p(\theta_{1:L})} \left[ \log \frac{ \frac{1}{L+1}\sum_{\ell=0}^Lp(\theta_\ell|h_T) p(\theta_{0:L}^{-\ell}) }{ p(\theta_{0:L})} \right]. \\
 \intertext{Now by the systemtry on term in side the log, we see that this expectation would be the same if it were instead taken over $p(\theta_i, h_T|\pi )p(\theta_{0:L}^{-i})$ for any $i\in\{0,\dots,L\}$ (with $i=0$ giving the original form).  Furthermore, the result is unchanged if we take the expectation over the mixture distribution $\frac{1}{L+1} \sum_{i=0}^L p(\theta_i, h_T|\pi )p(\theta_{0:L}^{-i})=p(h_T|\pi)\frac{1}{L+1} \sum_{i=0}^L p(\theta_i|h_T) p(\theta_{0:L}^{-i})$
 	and thus we have}
&= \E_{p(h_T|\pi)} \E_{\frac{1}{L+1} \sum_{i=0}^L p(\theta_i|h_T) p(\theta_{0:L}^{-i})} \left[ \log \frac{  \frac{1}{L+1}\sum_{\ell=0}^Lp(\theta_\ell|h_T) p(\theta_{0:L}^{-\ell})  }{p(\theta_{0:L}) } \right] \\
&=\E_{p(h_T|\pi)} \big[ \textnormal{KL}\left( \tilde{p}(\theta_{0:L} |h_T)  || p(\theta_{0:L}) \right)  \big]  \label{eq:thm2_KL}
\end{align}
where 
$\tilde{p}(\theta_{0:L} |h_T) =  \frac{1}{L+1}\sum_{\ell=0}^Lp(\theta_\ell|h_T) p(\theta_{0:L}^{-\ell})$, which is indeed a distribution since
\begin{equation}
\int \tilde{p}(\theta_{0:L} |h_T) d\theta_{0:L} = \frac{1}{L+1}\sum_{\ell=0}^L\left(  \int p(\theta_\ell|h_T) d\theta_\ell \cdot \int p(\theta_{0:L}^{-\ell}) d\theta_{0:L}^{-\ell} \right) = 1.
\label{eq:ptilde_integral}
\end{equation}
Now by Gibbs' inequality the expected KL in~\eqref{eq:thm2_KL} must be non-negative,  establishing $ \mathcal{I}_T(\pi) - \mathcal{L}_T(\pi,L)\ge 0$ and thus $\mathcal{I}_T(\pi) \geq \mathcal{L}_T(\pi,L)$ as required.

We next show monotonicity in $L$, i.e.    $ \mathcal{L}_T(\pi,L_2) \geq \mathcal{L}_T(\pi,L_1) $ for $L_2 \geq L_1 \geq 0$, using similar argument as above
\begin{align}
 \mathcal{L}_T(\pi,L_2) - \mathcal{L}_T(\pi,L_1) &=     \E_{p(\theta_0, h_T|\pi)} \E_{p(\theta_{1:L_2})} \left[ \log \frac{   \frac{1}{L_1+1}\sum_{i=0}^{L_1} p(h_T|\theta_i,\pi)    }{     \frac{1}{L_2+1}\sum_{j=0}^{L_2} p(h_T|\theta_j,\pi) } \right]    \\
 &= \E_{p(\theta_0, h_T|\pi)} \E_{p(\theta_{1:L_2})} \left[ \log \frac{   \frac{1}{L_1+1}\sum_{i=0}^{L_1} \big( p(\theta_i| h_T) / p(\theta_i )  \big) }{     \frac{1}{L_2+1}\sum_{j=0}^{L_2} \big(p(\theta_j|h_T) / p(\theta_j) \big) } \right] \\
 & = \E_{p(\theta_0, h_T|\pi)} \E_{p(\theta_{1:L_2})} \left[ \log \frac{   \frac{1}{L_1+1}\sum_{i=0}^{L_1} \big( p(\theta_i| h_T)  p(\theta_{0:L_1}^{-i} ) \big) /  p(\theta_{0:L_1})  }{     \frac{1}{L_2+1}\sum_{j=0}^{L_2} \big(p(\theta_j|h_T)   p(\theta_{0:L_2}^{-j} ) \big) / p(\theta_{0:L_2} ) } \right]\\
 &= \E_{p(\theta_0, h_T|\pi)} \E_{p(\theta_{1:L_2})} \left[ \log \frac{   \frac{1}{L_1+1}\sum_{i=0}^{L_1} p(\theta_i| h_T)  p(\theta_{0:L_2}^{-i} )   }{     \frac{1}{L_2+1}\sum_{j=0}^{L_2} p(\theta_j|h_T)   p(\theta_{0:L_2}^{-j} )  } \right] \\
 &= \E_{p(h_T|\pi)} \E_{ \frac{1}{L+1} \sum_{\ell=0}^{L_1} p(\theta_\ell | h_T)p(\theta_{0:L_2}^{-\ell})} \left[ \log \frac{   \frac{1}{L_1+1}\sum_{i=0}^{L_1} p(\theta_i| h_T)  p(\theta_{0:L_2}^{-i} )   }{     \frac{1}{L_2+1}\sum_{j=0}^{L_2} p(\theta_j|h_T)   p(\theta_{0:L_2}^{-j} )  } \right] \label{eq:thm2_monotonicity_monster} \\ 
 &=  \E_{p(h_T|\pi)} \big[ \textnormal{KL}(\tilde{p}_1 || \tilde{p}_2) \big] \geq 0
\end{align}
where $\tilde{p}_1$ and $\tilde{p}_2$ are, respectively, the distributions in the numerator and denominator in \eqref{eq:thm2_monotonicity_monster}.
The result then again follows by Gibbs' inequality.

Next we show $\mathcal{L}_T(\pi,L) \rightarrow \mathcal{I}_T(\pi) \text{ as } L \to \infty$. First, note that the denominator in~\eqref{eq:sPCE_objective}, $\frac{1}{L+1}\sum_{\ell=0}^L p(h_T|\theta_\ell,\pi)$,  is a consistent estimator of the marginal $p(h_T|\pi)$, since $\frac{1}{L+1} p(h_T|\theta_0,\pi) \rightarrow 0$, and by the Strong Law of Large Numbers
\begin{align}
\frac{1}{L+1}\sum_{\ell=1}^L p(h_T|\theta_\ell,\pi) = 
\frac{L}{L+1}\cdot \frac{1}{L}\sum_{\ell=1}^L p(h_T|\theta_\ell,\pi)  \xrightarrow[]{\text{a.s. }} \E_{p(\theta) } \left[ p(h_T|\theta,\pi)  \right] = p(h_T|\pi).
\end{align}
Now from~\eqref{eq:thm2_diff} we also have that
\begin{align}
	\label{eq:thm2_diff_2}
	\mathcal{I}_T(\pi) - \mathcal{L}_T(\pi,L) &= \E_{p(\theta_0, h_T|\pi)} \E_{p(\theta_{1:L})} \left[ \log \frac{\frac{1}{L+1} \sum_{\ell=0}^Lp(h_T|\theta_\ell, \pi)}{p(h_T|\pi)} \right]
\end{align}
and we have $\log \frac{\frac{1}{L+1}\sum_{\ell=0}^L p(h_T|\theta_\ell,\pi) }{p(h_T|\pi)} \rightarrow 0$ almost surely as $L\rightarrow \infty$.
The minor technical assumption, which is required to establish convergence is that there exist some $0<\kappa_1,\kappa_2<\infty$ such that\footnote{In practice, we
can actually weaken this assumption significantly if necessary by making $\kappa_1$ and $\kappa_2$ dependent on $h_T$ and $\theta$ then assuming that 
the expectation
$\E_{p(\theta_0, h_T|\pi)} \E_{p(\theta_{1:L})} [\log |\kappa_i(\theta_{j},h_T)|]$ is finite for $i\in\{1,2\}$ and $j\in\{0,1\}$.  This then permits $\kappa_1(h_T,\theta)\to0$ and 
$\kappa_2(h_T,\theta)\to\infty$ for certain $h_T$ and $\theta$, provided that these events are zero measure under both $p(\theta, h_T|\pi)$ and $p(\theta)p(h_T|\pi)$,
thereby avoiding potential issues with tail behavior in the limits of extreme values for $\theta$.}
\begin{equation}
\label{eq:thm2_assumption}
\kappa_1 \le \frac{p(h_T|\theta,\pi)}{p(h_T|\pi)} \le \kappa_2 \quad \forall \theta, h_T.
\end{equation}
using this assumption, the integrand of \eqref{eq:thm2_diff_2} is bounded, because
\begin{align}
	\left|\log \frac{\frac{1}{L+1}\sum_{\ell=0}^L p(h_T|\theta_\ell,\pi) }{p(h_T|\pi)}\right| &= \left| \log \left(\frac{1}{L+1}\sum_{\ell=0}^L \frac{p(h_T|\theta_\ell,\pi) }{p(h_T|\pi)} \right)\right| \\
	&\le \max\left( \left| \log \left(\max_\ell \frac{p(h_T|\theta_\ell,\pi) }{p(h_T|\pi)} \right)\right|,  \left| \log \left(\min_\ell \frac{p(h_T|\theta_\ell,\pi) }{p(h_T|\pi)} \right)\right| \right) \\
	&\le \max\left( \left| \log \kappa_2 \right|,  \left| \log \kappa_1 \right| \right) \\
	& < \infty.
\end{align}
Thus, the Bounded Convergence Theorem can be applied to conclude that $\mathcal{I}_T(\pi) - \mathcal{L}_T(\pi,L) \to 0$ as $L\to\infty$.

Finally,  for the rate of convergence we apply the inequality $\log x \leq x - 1$ to \eqref{eq:thm2_diff} to get
\begin{align}
 \mathcal{I}_T(\pi) - \mathcal{L}_T(\pi,L) &=  \E_{p(\theta_0, h_T|\pi)} \E_{p(\theta_{1:L})} \left[ \log \frac{\frac{1}{L+1} \sum_{\ell=0}^Lp(h_T|\theta_\ell, \pi)}{p(h_T|\pi)} \right] \\
 & \leq  \E_{p(\theta_0, h_T|\pi)} \E_{p(\theta_{1:L})} \left[  \frac{\frac{1}{L+1} \sum_{\ell=0}^Lp(h_T|\theta_\ell, \pi)}{p(h_T|\pi)} - 1 \right] \\
 & =   \E_{p(\theta_0, h_T|\pi)}  \left[  \frac{\frac{1}{L+1} \left( p(h_T|\theta_0 \pi)+ \sum_{\ell=1}^L \E_{p(\theta_{1:L})} [p(h_T|\theta_\ell, \pi)]  \right) }{p(h_T|\pi)} - 1 \right] \\
 & =  \E_{p(\theta_0, h_T|\pi)}  \left[  \frac{\frac{1}{L+1} \left( p(h_T|\theta_0 \pi)+Lp(h_T|\pi) \right) }{p(h_T|\pi)} - 1 \right] \\
 & = \frac{1}{L+1}    \E_{p(\theta_0, h_T|\pi)} \left[  \frac{p(h_T|\theta_0, \pi)}{p(h_T|\pi)} - 1 \right] \\
 & = \frac{C}{L+1}  ,
\end{align}
where we can conclude $C<\infty$ using \eqref{eq:thm2_assumption}.
Combining this with the our previous result showing that $\mathcal{L}_T(\pi,L)$ is a lower bound on $\mathcal{I}_T(\pi)$, we have shown that
\begin{equation}
	0 \le  \mathcal{I}_T(\pi) - \mathcal{L}_T(\pi,L) \le \frac{C}{L+1}.
\end{equation}
This establishes the $\mathcal{O}(L^{-1})$ rate of convergence. 
\end{proof}

\permutation*
\textbf{Technical note:} In this statement, the first expectation is with respect to $p(h_T|\pi)$ for policy $\pi$ and the second is with respect to $p(h_T|\pi')$, where for $t>k$ we set $\pi'(h_t) = \pi(\sigma^{-1}(h_t))$ where $\sigma^{-1}$ acts on the first $k$ labels by permutation and as the identity on other labels. This means we remove \emph{explicit} variability under permutation caused by $\pi$, and show that no other source of variability can arise.
\begin{proof}
	To begin, we set up some notation. Given the partial history $h_k=h_k^1$, we complete the experiment by sampling $(\xi_t,y_t)$ for $t=k+1,...,T$. We denote the resulting full history as $h_T^1$, and define $h_T^2$ similarly.
	Next, we use Theorem~\ref{proposition:terminal} to rewrite the conditional objective under consideration as
	\begin{align}
		\E_{p(h_T^1|\pi)} \left[ \sum_{t=1}^{T} I_{h_{t-1}}(\xi_t) \middle| h_k = h^1_k\right] &= \E_{p(\theta|h_k^1)\prod_{t=k+1}^T p(y_t|\theta,\xi_t)}\left[\log p(h_T^1|\theta,\pi) - \log p(h_T^1|\pi) \right] \\
		&=\E_{p(\theta|h_k^1)\prod_{t=k+1}^T p(y_t|\theta,\xi_t)}\left[\log p(\theta|h_T^1) - \log p(\theta) \right] \\
		&=\E_{p(\theta|h_k^1)p(h_T^1|h_k^1,\theta,\pi)}\left[\log p(\theta|h_T^1) - \log p(\theta) \right].
	\end{align}
	A central point of the proof is that the posterior distribution $p(\theta|h_t)$ is invariant to the order of the history. Indeed, we have
	\begin{equation}
	p(\theta|h_t) \propto p(\theta) \prod_{s=1}^t p(y_s|\theta,\xi_s)
	\end{equation}
	which shows that $p(\theta|h_k^1) = p(\theta|h_k^2)$. Given a continuation of the history $(\xi_{k+1},y_{k+1}),...,(\xi_T,y_T)$, if we use the same continuation starting from $h_k^1$ and $h_k^2$ to give $h_T^1$ and $h_T^2$ then we have $p(\theta|h_T^1) = p(\theta|h_T^2)$. However, we need to show that the continuations $(\xi_{k+1},y_{k+1}),...,(\xi_T,y_T)$ are equal in distribution.
	
	We now show that the sampling distributions of $(\xi_{k+1},y_{k+1}),...,(\xi_T,y_T)$ are equal starting from $h_k^1$ and $h_k^2$. We have shown that $\theta\sim p(\theta|h_k^1)$ is unchanged in distribution if we instead sample $\theta \sim p(\theta|h_k^2)$. Further, we have
	\begin{align}
		\xi_{k+1}^1 = \pi(h_k^1) \qquad \xi_{k+1}^2 = \pi'(h_k^2)
	\end{align}
	which, by the construction of $\pi'$ implies $\xi_{k+1}^1 = \xi_{k+1}^2$. Together, these results imply that the observations $y_{k+1}^1$ and $y_{k+1}^2$ are equal in distribution. Proceeding inductively, since $h_{k+1}^1$ and $h_{k+1}^2$ are equal in distribution a similar argument shows that $h_{k+2}^1$ and $h_{k+2}^2$ have the same distribution. Continuing in this way, we have that $h_T^1$ and $h_T^2$ are equal in distribution. Together, these results imply that
	\begin{equation}
	\label{eq:symmetry_proof}
	\E_{p(\theta|h_k^1)p(h_T^1|h_k^1,\theta,\pi)}\left[\log p(\theta|h_T^1) - \log p(\theta) \right] = \E_{p(\theta|h_k^2)p(h_T^2|h_k^2,\theta,\pi')}\left[\log p(\theta|h_T^2) - \log p(\theta) \right]
	\end{equation}
	which conclude the first part of the proof.
	
	To establish the permutation invariance of the optimal policy $\pi^*$, we reason by induction starting with $k=T-1$, using a dynamic programming style argument. Given $h_{T-1}$, the total EIG is a function of $p(\theta|h_{T-1})$ and $\xi_T$. Since we do not need to account for future asymmetry in the policy, we immediately have that the optimal final design $\xi_T$ only depends on $p(\theta|h_{T-1})$, which implies that is invariant to the order of the history.
	
	We now assume that the optimal policy is permutation invariant starting from $k+2$. Using the previous result \eqref{eq:symmetry_proof}, we separete out the design $\xi_{k+1}$ and substitute $\pi^*$ for both $\pi$ and $\pi'$ (since it is permutation invariant for the steps after $k+1$ by inductive hypothesis) to give
	\begin{equation}
	\begin{split}
	&\E_{p(\theta|h_k^1)p(y_{k+1}|\theta,\xi_{k+1})\prod_{t=k+2}^T p(y_t|\theta,\pi^*(h_{t-1}))}\left[\log p(\theta|h_T^1) - \log p(\theta) \right] \\ &= \E_{p(\theta|h_k^2)p(y_{k+1}|\theta,\xi_{k+1})\prod_{t=k+2}^T p(y_t|\theta,\pi^*(h_{t-1}))}\left[\log p(\theta|h_T^2) - \log p(\theta) \right].
	\end{split}
	\label{eq:bellmanesque}
	\end{equation}
	To extend the optimal policy to $k+1$, we consider choosing $\xi_{k+1}$ and then following $\pi^*$ thereafter.
	As \eqref{eq:bellmanesque} shows us, the decision problem for $\xi_{k+1}$ is the same starting from $h_k^1$ and $h_k^2$ because the posterior distributions $p(\theta|h_k^1)$ and $p(\theta|h_k^2)$ are equal, and the optimal policy after $k+1$ does not depend on history order. This implies that the optimal choice of $\xi_{k+1}$ is the same for $h_k^1$ and $h_k^2$.
	This implies that the optimal policies starting in $h_k^1$ and $h_k^2$ are the same. This completes the proof.	
\end{proof}

\begin{restatable}{theorem}{seqnmc}
	\label{thm:seqnmc}
	For a design function $\pi$ and a number of contrastive samples $L\ge 1$, let
	\begin{equation}
	\mathcal{U}_T(\pi,L) = \E\left[ \log \frac{p(h_T|\theta_0,\pi)}{\frac{1}{L}\sum_{\ell=1}^L p(h_T|\theta_\ell,\pi) } \right]
	\end{equation}
	where the expectation is over $\theta_0,h_T \sim p(\theta,h_T|\pi)$ and $\theta_{1:L}\sim p(\theta)$ independently. Then,
	\begin{equation}
	\mathcal{U}_T(\pi,L) \downarrow \mathcal{I}_T(\pi) \text{ as } L \to \infty
	\end{equation}
	at a rate $\mathcal{O}(L^{-1})$.
\end{restatable}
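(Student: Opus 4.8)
The plan is to mirror closely the structure of the proof of Theorem~\ref{thm:seqace}, exploiting that $\mathcal{U}_T(\pi,L)$ is the ``upper'' counterpart obtained by excluding $\theta_0$ from the contrastive average. First I would establish $\mathcal{U}_T(\pi,L) \ge \mathcal{I}_T(\pi)$. Subtracting the two expressions, using the form of $\mathcal{I}_T(\pi)$ from Theorem~\ref{proposition:terminal}, and cancelling the common $\log p(h_T|\theta_0,\pi)$ term gives
\[
\mathcal{U}_T(\pi,L) - \mathcal{I}_T(\pi) = \E_{p(h_T|\pi)}\E_{p(\theta_{1:L})}\left[ -\log\left( \frac{1}{L}\sum_{\ell=1}^L \frac{p(h_T|\theta_\ell,\pi)}{p(h_T|\pi)} \right) \right],
\]
where the outer expectation over $\theta_0$ has been dropped since the integrand no longer depends on it. Writing $p(h_T|\theta_\ell,\pi)/p(h_T|\pi) = p(\theta_\ell|h_T)/p(\theta_\ell)$ and using the analogous shorthand $p(\theta_{1:L}^{-\ell})$ as in the proof of Theorem~\ref{thm:seqace}, the inner expectation is exactly $\mathrm{KL}\!\big( p(\theta_{1:L}) \,\|\, \hat q(\theta_{1:L}|h_T) \big)$ with $\hat q(\theta_{1:L}|h_T) := \frac{1}{L}\sum_{\ell=1}^L p(\theta_\ell|h_T)p(\theta_{1:L}^{-\ell})$, which is a valid density by the same integration argument as Eq.~\eqref{eq:ptilde_integral}. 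Non-negativity then follows from Gibbs' inequality (or, more directly, from Jensen's inequality applied to the concave $\log$).

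Next I would prove monotonicity, i.e.\ $\mathcal{U}_T(\pi,L_2) \le \mathcal{U}_T(\pi,L_1)$ for $L_2 \ge L_1 \ge 1$. Fixing $h_T$ and treating $X_\ell := p(h_T|\theta_\ell,\pi)$ as i.i.d.\ samples, the $L_2$-term average equals the expectation over a uniformly random size-$L_1$ subset $S$ of the average $\frac{1}{L_1}\sum_{i\in S}X_i$; Jensen applied to the concave $\log$ then gives $\E\big[\log\frac{1}{L_2}\sum_{j=1}^{L_2}X_j\big] \ge \E\big[\log\frac{1}{L_1}\sum_{i=1}^{L_1}X_i\big]$, and since $\mathcal{U}$ involves $-\log$ of this average the inequality flips, yielding monotone decrease in $L$. (This can equivalently be cast as an expected KL divergence between the $L_1$- and $L_2$-versions of $\hat q$, paralleling Theorem~\ref{thm:seqace}.)

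Finally I would show $\mathcal{U}_T(\pi,L)\to\mathcal{I}_T(\pi)$ at an $\mathcal{O}(L^{-1})$ rate, reusing the technical assumption~\eqref{eq:thm2_assumption}. The Strong Law of Large Numbers gives $\frac1L\sum_{\ell=1}^L p(h_T|\theta_\ell,\pi)\to p(h_T|\pi)$ almost surely, so the integrand tends to $0$; under~\eqref{eq:thm2_assumption} the average ratio lies in $[\kappa_1,\kappa_2]$ and the integrand is bounded by $\max(|\log\kappa_1|,|\log\kappa_2|)<\infty$, so the Bounded Convergence Theorem yields convergence. For the rate, set $M_L := \frac1L\sum_{\ell=1}^L p(h_T|\theta_\ell,\pi)/p(h_T|\pi)\in[\kappa_1,\kappa_2]$, so $\mathcal{U}_T(\pi,L)-\mathcal{I}_T(\pi)=\E[-\log M_L]$. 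A second-order Taylor expansion of $-\log$ about $1$, using $\kappa_1\le1\le\kappa_2$ and bounding the Lagrange remainder by $1/(2\kappa_1^2)$, gives $-\log M_L \le -(M_L-1) + \tfrac{1}{2\kappa_1^2}(M_L-1)^2$. Taking the inner expectation over $\theta_{1:L}$ annihilates the linear term and leaves $\tfrac{1}{2\kappa_1^2}\Var_{\theta_{1:L}}(M_L) = \tfrac{1}{2\kappa_1^2 L}\Var_{p(\theta)}\!\big(p(h_T|\theta,\pi)/p(h_T|\pi)\big)$, and this variance is uniformly bounded in $h_T$ (e.g.\ by $\kappa_2^2$) by~\eqref{eq:thm2_assumption}. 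Taking the outer expectation over $h_T$ gives $0 \le \mathcal{U}_T(\pi,L)-\mathcal{I}_T(\pi) \le C/L$ for a finite constant $C$, establishing the rate and, together with monotonicity, the ``$\downarrow$''.

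I expect the main obstacle to be precisely this rate argument: unlike Theorem~\ref{thm:seqace}, where the bound $\log x\le x-1$ collapses neatly because the contrastive average includes $\theta_0$ (producing the clean $\frac{C}{L+1}$ telescoping), here the denominator is a pure i.i.d.\ mean, the first-order bound points the wrong way, and one must instead use a second-order expansion whose remainder has to be controlled uniformly in $h_T$ via the boundedness assumption~\eqref{eq:thm2_assumption}.
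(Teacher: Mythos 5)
Your proposal is correct, and for the non-negativity, monotonicity, and almost-sure convergence steps it matches the paper's proof essentially step for step (the subsampling-plus-Jensen trick for monotonicity; SLLN plus bounded convergence under assumption~\eqref{eq:thm2_assumption} for the limit). Two points differ. For the upper-bound inequality the paper simply applies Jensen's inequality to move $\E_{p(\theta_\ell)}$ inside the $\log$ of the denominator, whereas you recast the gap as $\E_{p(h_T|\pi)}\left[\mathrm{KL}\left(p(\theta_{1:L})\,\|\,\hat q(\theta_{1:L}|h_T)\right)\right]$; the two are equivalent, but your version makes explicit the symmetry with the sPCE proof, where the KL appears in the opposite direction. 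The more substantive difference is the rate argument: the paper expands $-\log(1+x)$ with $x=\frac1L\sum_\ell\left(p(h_T|\theta_\ell,\pi)/p(h_T|\pi)-1\right)$, observes $\E[x]=0$ and $\E[x^2]=\mathcal{O}(L^{-1})$, and then cites external references to assert that the higher-order terms are $o(L^{-1})$. Your second-order Taylor expansion with a Lagrange remainder bounded by $1/(2\kappa_1^2)$ — valid because \eqref{eq:thm2_assumption} forces $\kappa_1\le 1\le\kappa_2$ (the ratio has unit mean under $p(\theta)$) and keeps the i.i.d.\ average in $[\kappa_1,\kappa_2]$ — avoids that external appeal entirely and yields a self-contained $C/L$ bound with an explicit constant. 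This is arguably a cleaner and more rigorous treatment of the rate than the paper's own, at no extra cost since the boundedness condition is already assumed.
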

\begin{proof}

We first show $ \mathcal{U}_T(\pi,L)$ is an upper bound to $\mathcal{I}_T(\pi)$
\begin{align}
 \mathcal{U}_T(\pi,L) - \mathcal{I}_T(\pi)  &= \E_{p(\theta_0, h_T|\pi)} \E_{p(\theta_{1:L})} \left[ \log \frac{p(h_T|\theta_0,\pi)}{\frac{1}{L}\sum_{\ell=1}^L p(h_T|\theta_\ell,\pi) } \right]  -  \E_{p(\theta_0, h_T| \pi)}\left[ \log \frac{p(h_T|\theta_0,\pi)}{p(h_T|\pi)} \right] \\
 &= \E_{p(\theta_0, h_T|\pi)} \E_{p(\theta_{1:L})} \left[ \log {p(h_T|\pi) }-\log\left({ \frac{1}{L} \sum_{\ell=1}^Lp(h_T|\theta_\ell, \pi) } \right) \right] \\
 \intertext{now using Jensen's inequality}
 &\ge \E_{p(\theta_0, h_T|\pi)}  \left[ \log {p(h_T|\pi) }-\log\left( \frac{1}{L} \sum_{\ell=1}^L \E_{p(\theta_{\ell})} \left[p(h_T|\theta_\ell, \pi) \right] \right) \right] \\
 &= \E_{p(\theta_0, h_T|\pi)}  \left[ \log {p(h_T|\pi) }-\log\left( \frac{1}{L} \sum_{\ell=1}^L p(h_T|\pi) \right) \right] \\
 &= \E_{p(\theta_0, h_T|\pi)}  \left[ \log {p(h_T|\pi) }-\log p(h_T|\pi) \right] \\
 &= 0.
\end{align}

To show monotonicity in $L$, pick $L_2\geq L_1 \geq 0$ and consider the difference
\begin{align}
\delta :=  \mathcal{U}_T(\pi,L_1) - \mathcal{U}_T(\pi,L_2) &=  \E_{p(\theta_0, h_T|\pi)} \E_{p(\theta_{1:L_2})} \left[  \log  \frac{  \frac{1}{L_2}\sum_{j=1}^{L_2} p(h_T|\theta_j,\pi) }{\frac{1}{L_1}\sum_{i=1}^{L_1} p(h_T|\theta_i,\pi) }   \right] .
\end{align} 

Notice that we can write expression in the numerator $ \frac{1}{L_2}\sum_{j=1}^{L_2} p(h_T|\theta_j,\pi) = \E_{J_1, \dots, J_{L_1}} \left[ \frac{1}{L_1} \sum_{k=1}^{L_1} p(h_T|\theta_{J_k},\pi)\right]$, where the indices $J_k$ have been uniformly drawn from $1, \dots, L_2$.  
We have
\begin{align}
\delta &=  \E_{p(\theta_0, h_T|\pi)} \E_{p(\theta_{1:L_2})} \left[ \log \E_{J_1, \dots, J_{L_1}}  \left[ \frac{1}{L_1} \sum_{k=1}^{L_1} p(h_T|\theta_{J_k},\pi)\right] - \log \frac{1}{L_1}\sum_{i=1}^{L_1} p(h_T|\theta_i,\pi) \right] \\
\intertext{now applying Jensen's Inequality}
&\ge \E_{p(\theta_0, h_T|\pi)} \E_{p(\theta_{1:L_2})} \left[  \E_{J_1, \dots, J_{L_1}}  \left[\log \frac{1}{L_1} \sum_{k=1}^{L_1} p(h_T|\theta_{J_k},\pi)\right] - \log \frac{1}{L_1}\sum_{i=1}^{L_1} p(h_T|\theta_i,\pi) \right] \\
\intertext{then use the fact that any $L_1$-subset of $\theta_1,...,\theta_{L_2}$ has the same distribution}
&=   \E_{p(\theta_0, h_T|\pi)} \E_{p(\theta_{1:L_2})} \left[  \log \frac{1}{L_1}\sum_{i=1}^{L_1} p(h_T|\theta_i,\pi)    - \log \frac{1}{L_1}\sum_{i=1}^{L_1} p(h_T|\theta_i,\pi)  \right] = 0
\end{align}
which establishes monotonicity.

Finally, convergence is shown analogously to Theorem \ref{thm:seqace}. Again we adopt the assumption~\eqref{eq:thm2_assumption}.  The Strong Law of Large Numbers gives us almost sure convergence $\log \left({\frac{1}{L}\sum_{\ell=1}^L p(h_T|\theta_\ell,\pi) }\right) \rightarrow \log {p(h_T|\pi)}$ as $L\to\infty$. Applying the Bounded Convergence Theorem, as in Theorem~\ref{thm:seqace}, we have
\begin{align}
\lim_{L\rightarrow\infty} \left(\mathcal{U}_T(\pi,L) - \mathcal{I}_T(\pi,L)\right)  &= \E_{p(\theta_0, h_T|\pi)} \E_{p(\theta_{1:L})} \left[\lim_{L\rightarrow\infty} \log  \frac{p(h_T|\pi)}{\frac{1}{L}\sum_{\ell=1}^L p(h_T|\theta_\ell,\pi) } \right] \\
&=0.
\end{align}
Finally, for the rate of convergence, we have
\begin{align}
	\mathcal{U}_T(\pi,L) - \mathcal{I}_T(\pi) &=  \E_{p(\theta_0, h_T|\pi)} \E_{p(\theta_{1:L})} \left[ \log \frac{p(h_T|\pi)}{\frac{1}{L} \sum_{\ell=1}^Lp(h_T|\theta_\ell, \pi)} \right] \\
	&=  \E_{p(\theta_0, h_T|\pi)} \E_{p(\theta_{1:L})} \left[ -\log  \left(\frac{1}{L} \sum_{\ell=1}^L\frac{p(h_T|\theta_\ell, \pi)}{p(h_T|\pi)}\right) \right] \\
	&=  \E_{p(\theta_0, h_T|\pi)} \E_{p(\theta_{1:L})} \left[ -\log  \left(1 + \frac{1}{L} \sum_{\ell=1}^L\left(\frac{p(h_T|\theta_\ell, \pi)}{p(h_T|\pi)}-1\right)\right) \right] \\
	&=  \E_{p(\theta_0, h_T|\pi)} \E_{p(\theta_{1:L})} \left[ \sum_{n=1}^\infty (-1)^n \frac{x^n}{n} \right]
\end{align}
where $x = \frac{1}{L} \sum_{\ell=1}^L\left(\frac{p(h_T|\theta_\ell, \pi)}{p(h_T|\pi)}-1\right)$ and we have applied the Taylor expansion for $\log(1+x)$. We have
\begin{align}
\E_{p(\theta_0, h_T|\pi)} \E_{p(\theta_{1:L})} \left[x \right] &=0\\ 
\E_{p(\theta_0, h_T|\pi)} \E_{p(\theta_{1:L})} \left[ x^2 \right] &= \frac{1}{L}\E_{p(\theta_0, h_T|\pi)} \E_{p(\theta_{1:L})} \left[ \left( \frac{p(h_T|\theta_\ell, \pi)}{p(h_T|\pi)}-1 \right)^2 \right]
\end{align}
and higher order terms are $o(L^{-1})$ \citep{angelova2012moments,nowozin2018debiasing}. This shows that $\mathcal{U}_T(\pi,L) - \mathcal{I}_T(\pi)\to 0$ at a rate $\mathcal{O}(L^{-1})$. 
This concludes the proof.
\end{proof}

\section{Additional bounds}
\label{sec:app:bounds}
In this section, we consider a more general lower bound on $\mathcal{I}_T(\pi)$ based on the ACE bound of~\citet{foster2020unified}.
We consider a parametrized proposal distribution $q(\theta;h_T)$ which can be used to approximate the posterior $p(\theta|h_T)$. One example of such a proposal would be an amortized variational approximation to the posterior that takes as input $h_T$ and outputs a variational distribution over $\theta$. It would be possible to share the representation $R(h_T)$ from \eqref{eq:defnofr} between the design network and the inference network. However, the following theorem is not limited to variational posteriors, and concerns any parametrized proposal distribution.
\begin{restatable}{theorem}{additionalbounds}
	\label{thm:additionalbounds}
	For a design function $\pi$, a number of contrastive samples $L\ge 1$, and a parametrized proposal $q(\theta;h_T)$, we have the sequential Adaptive Contrastive Estimation (sACE) lower bound
	\begin{equation}
	\mathcal{I}_T(\pi) \ge \E_{p(\theta_0,h_T|\pi)q(\theta_{1:L};h_T)}\left[ \log \frac{p(h_T|\theta_0,\pi)}{\frac{1}{L+1}\sum_{\ell=0}^L \frac{p(h_T|\theta_\ell,\pi)p(\theta_\ell) }{q(\theta_\ell;h_T)}} \right]
	\end{equation}
	and the  sequential Variational Nested Monte Carlo (sVNMC) upper bound
	\begin{equation}
		\mathcal{I}_T(\pi) \le  \E_{p(\theta_0,h_T|\pi)q(\theta_{1:L};h_T)}\left[ \log \frac{p(h_T|\theta_0,\pi)}{\frac{1}{L}\sum_{\ell=1}^L \frac{p(h_T|\theta_\ell,\pi)p(\theta_\ell) }{q(\theta_\ell;h_T)} } \right].
	\end{equation}
\end{restatable}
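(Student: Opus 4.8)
The plan is to mirror the proofs of Theorems~\ref{thm:seqace} and~\ref{thm:seqnmc}, but with the prior samples $\theta_{1:L}\sim p(\theta)$ replaced by proposal samples $\theta_{1:L}\sim q(\theta;h_T)$ carrying importance weights $p(\theta_\ell)/q(\theta_\ell;h_T)$, and with Bayes' theorem used in the form $p(\theta_\ell|h_T)=p(h_T|\theta_\ell,\pi)p(\theta_\ell)/p(h_T|\pi)$ to turn the weighted likelihood ratios into posterior/proposal density ratios. (Setting $q(\theta;h_T)=p(\theta)$ recovers sPCE and sNMC exactly.) Throughout I assume the proposal is absolutely continuous with respect to the posterior, i.e.\ $q(\theta;h_T)>0$ wherever $p(\theta|h_T)>0$, which is the analogue of the minor technical assumption needed for the bounds to be finite.

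For the sACE lower bound I would first take the difference. Using Theorem~\ref{proposition:terminal}, $\mathcal{I}_T(\pi)=\E_{p(\theta_0,h_T|\pi)}[\log p(h_T|\theta_0,\pi)-\log p(h_T|\pi)]$, and inserting the redundant expectation over $q(\theta_{1:L};h_T)$ and cancelling $p(h_T|\theta_0,\pi)$ between the two numerators yields, after applying Bayes' theorem term-by-term,
\[
\mathcal{I}_T(\pi)-(\text{sACE bound})=\E_{p(\theta_0,h_T|\pi)q(\theta_{1:L};h_T)}\!\left[\log\!\left(\tfrac{1}{L+1}\sum_{\ell=0}^{L}\tfrac{p(\theta_\ell|h_T)}{q(\theta_\ell;h_T)}\right)\right].
\]
The quantity inside the logarithm is a symmetric function of $\theta_{0:L}$, so exactly as in the sPCE proof the inner expectation (given $h_T$), taken over $p(\theta_0|h_T)\prod_{\ell\ge1}q(\theta_\ell;h_T)$, is unchanged if we replace that sampling law by its symmetrised mixture $\tilde r(\theta_{0:L}\mid h_T):=\frac{1}{L+1}\sum_{j=0}^{L}p(\theta_j|h_T)\prod_{\ell\ne j}q(\theta_\ell;h_T)$, which one checks integrates to one. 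Since $\frac{1}{L+1}\sum_\ell p(\theta_\ell|h_T)/q(\theta_\ell;h_T)=\tilde r(\theta_{0:L}\mid h_T)/\prod_\ell q(\theta_\ell;h_T)$, the inner expectation equals $\mathrm{KL}\!\big(\tilde r(\cdot\mid h_T)\,\|\,\prod_\ell q(\cdot;h_T)\big)\ge 0$ by Gibbs' inequality; averaging over $h_T\sim p(h_T|\pi)$ gives the bound.

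For the sVNMC upper bound the argument is shorter: the same use of Theorem~\ref{proposition:terminal} and Bayes' theorem gives
\[
(\text{sVNMC bound})-\mathcal{I}_T(\pi)=\E_{p(\theta_0,h_T|\pi)}\,\E_{q(\theta_{1:L};h_T)}\!\left[-\log\!\left(\tfrac1L\sum_{\ell=1}^{L}\tfrac{p(\theta_\ell|h_T)}{q(\theta_\ell;h_T)}\right)\right],
\]
and since $-\log$ is convex, Jensen's inequality applied to the inner expectation (with $h_T$ and $\theta_0$ held fixed) lower-bounds it by $-\log\E_{q(\theta_{1:L};h_T)}[\frac1L\sum_\ell p(\theta_\ell|h_T)/q(\theta_\ell;h_T)]=-\log 1=0$, using $\E_{q(\theta;h_T)}[p(\theta|h_T)/q(\theta;h_T)]=1$. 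Taking the outer expectation over $p(\theta_0,h_T|\pi)$ of this non-negative quantity preserves non-negativity, giving $\mathcal{I}_T(\pi)\le(\text{sVNMC bound})$.

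I expect the only genuine subtlety — the main obstacle — to be the bookkeeping around the asymmetry of the joint sampling law: $\theta_0$ is drawn from $p(\theta_0\mid h_T)$ while $\theta_{1:L}$ come from $q(\cdot;h_T)$, so the joint is not exchangeable, and the symmetrisation step is only legitimate because the integrand is a symmetric function of $\theta_{0:L}$. Care is then needed to confirm that $\tilde r$ is a bona fide density and that neither the $\mathrm{KL}$ nor the Jensen step is vacuous, which is precisely where the absolute-continuity assumption on $q$ enters. Monotonicity in $L$ and the $\mathcal{O}(L^{-1})$ rate are not claimed in this statement, but would follow, for $q$ fixed, by repeating the manipulations in the proofs of Theorems~\ref{thm:seqace} and~\ref{thm:seqnmc}.
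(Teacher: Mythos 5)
Your proposal is correct and follows essentially the same route as the paper's proof: the sACE difference is symmetrised into an expected KL divergence between the mixture $\tilde r$ (the paper's $\breve q$) and the product proposal, and the sVNMC difference is handled by Jensen's inequality using the fact that the importance-weighted average is an unbiased estimator (you phrase it as $\E_{q}[p(\theta|h_T)/q(\theta;h_T)]=1$, the paper as $\E_{q}[p(h_T|\theta,\pi)p(\theta)/q(\theta;h_T)]=p(h_T|\pi)$, which are the same identity up to a factor of $p(h_T|\pi)$). Your explicit remark on the asymmetry of the joint sampling law and the absolute-continuity requirement on $q$ is exactly the point the paper relies on implicitly in its symmetrisation step.
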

\begin{proof}
	We begin by showing the sACE lower bound. The proof closely follows that of Theorem~\ref{thm:seqace}. We have the error term
	\begin{align}
		\delta_{sACE} &= \E_{p(\theta_0, h_T| \pi)}\left[ \log \frac{p(h_T|\theta_0,\pi)}{p(h_T|\pi)} \right]  - \E_{p(\theta_0, h_T|\pi)} \E_{q(\theta_{1:L};h_T)} \left[ \log \frac{p(h_T|\theta_0,\pi)}{\frac{1}{L+1}\sum_{\ell=0}^L \frac{p(h_T|\theta_\ell,\pi)p(\theta_\ell)}{q(\theta_\ell;h_T)} } \right] \\
		&= \E_{p(\theta_0, h_T|\pi)} \E_{q(\theta_{1:L};h_T)} \left[ \log \frac{\frac{1}{L+1} \sum_{\ell=0}^L \frac{p(h_T|\theta_\ell,\pi)p(\theta_\ell)}{q(\theta_\ell;h_T)} }{p(h_T|\pi)} \right] \label{eq:thm5_diff} \\
		&= \E_{p(\theta_0, h_T|\pi)} \E_{q(\theta_{1:L};h_T)} \left[ \log \left(\frac{1}{L+1}\sum_{\ell=0}^L \frac{p(\theta_\ell|h_T)}{q(\theta_\ell;h_T)} \right)\right] \\
		\intertext{now introducing the shorthand $q(\theta_{0:L}^{-\ell};h_T):=q\left(\theta_{0:L\backslash\{\ell\}};h_T\right)=\prod_{j=0,j\neq \ell}^L q(\theta_j;h_T)$,}
		&= \E_{p(\theta_0, h_T|\pi)} \E_{q(\theta_{1:L};h_T)} \left[ \log \frac{ \frac{1}{L+1}\sum_{\ell=0}^Lp(\theta_\ell|h_T) q(\theta_{0:L}^{-\ell};h_T) }{ q(\theta_{0:L};h_T)} \right]. \\
		\intertext{Now by the systemtry on term in side the log, we see that this expectation would be the same if it were instead taken over $p(\theta_i, h_T|\pi )q(\theta_{0:L}^{-i};h_T)$ for any $i\in\{0,\dots,L\}$.  It is also the same if we take the expectation over $\frac{1}{L+1} \sum_{i=0}^L p(\theta_i, h_T|\pi )q(\theta_{0:L}^{-i};h_T)=p(h_T|\pi)\frac{1}{L+1} \sum_{i=0}^L p(\theta_i|h_T) q(\theta_{0:L}^{-i};h_T)$
			and thus we have}
		&= \E_{p(h_T|\pi)} \E_{\frac{1}{L+1} \sum_{i=0}^L p(\theta_i|h_T) q(\theta_{0:L}^{-i};h_T)} \left[ \log \frac{  \frac{1}{L+1}\sum_{\ell=0}^Lp(\theta_\ell|h_T) q(\theta_{0:L}^{-\ell};h_T)  }{q(\theta_{0:L};h_T) } \right] \\
		&=\E_{p(h_T|\pi)} \big[ \textnormal{KL}\left( \breve{q}(\theta_{0:L} ;h_T)  || q(\theta_{0:L};h_T) \right)  \big]  \label{eq:thm5_KL}
	\end{align}
	where 
	$\breve{q}(\theta_{0:L} ;h_T) =  \frac{1}{L+1}\sum_{\ell=0}^L p(\theta_\ell|h_T) q(\theta_{0:L}^{-\ell};h_T)$, which is indeed a distribution since
	\begin{equation}
	\int \breve{q}(\theta_{0:L} ;h_T) d\theta_{0:L} = \frac{1}{L+1}\sum_{\ell=0}^L\left(  \int p(\theta_\ell|h_T) d\theta_\ell \cdot \int q(\theta_{0:L}^{-\ell};h_T) d\theta_{0:L}^{-\ell} \right) = 1.
	\label{eq:thm5_ptilde_integral}
	\end{equation}
	Now by Gibb's inequality the expected KL in~\eqref{eq:thm5_KL} must be non-negative,  establishing the required lower bound.
	
	Turning to the sVNMC bound, we use a proof that is close in spirit to Theorem~\ref{thm:seqnmc}. We have the error term
	\begin{align}
		\delta_{sVNMC} &= \E_{p(\theta_0, h_T|\pi)} \E_{q(\theta_{1:L};h_T)} \left[ \log \frac{p(h_T|\theta_0,\pi)}{\frac{1}{L}\sum_{\ell=1}^L \frac{p(h_T|\theta_\ell,\pi)p(\theta_\ell)}{q(\theta_\ell;h_T)} } \right]  -  \E_{p(\theta_0, h_T| \pi)}\left[ \log \frac{p(h_T|\theta_0,\pi)}{p(h_T|\pi)} \right] \\
		&= \E_{p(\theta_0, h_T|\pi)} \E_{q(\theta_{1:L};h_T)} \left[ \log {p(h_T|\pi) }-\log\left({ \frac{1}{L} \sum_{\ell=1}^L\frac{p(h_T|\theta_\ell, \pi)p(\theta_\ell)}{q(\theta_\ell;h_T)} } \right) \right] \\
		\intertext{now using Jensen's inequality}
		&\ge \E_{p(\theta_0, h_T|\pi)}  \left[ \log {p(h_T|\pi) }-\log\left( \frac{1}{L} \sum_{\ell=1}^L \E_{q(\theta_{\ell};h_T)} \left[\frac{p(h_T|\theta_\ell, \pi)p(\theta_\ell)}{q(\theta_\ell;h_T)} \right] \right) \right] \\
		&=\E_{p(\theta_0, h_T|\pi)}  \left[ \log {p(h_T|\pi) }-\log\left( \frac{1}{L} \sum_{\ell=1}^L \E_{p(\theta_{\ell})} \left[p(h_T|\theta_\ell, \pi) \right] \right) \right] \\
		&= \E_{p(\theta_0, h_T|\pi)}  \left[ \log {p(h_T|\pi) }-\log\left( \frac{1}{L} \sum_{\ell=1}^L p(h_T|\pi) \right) \right] \\
		&= \E_{p(\theta_0, h_T|\pi)}  \left[ \log {p(h_T|\pi) }-\log p(h_T|\pi) \right] \\
		&= 0.
	\end{align}
This establishes the upper bound.
\end{proof}

\section{Gradient details}
\label{sec:app:method}

\subsection{Score function gradient}\label{sec:app_score_grad}
Recall that our sPCE objective is
\begin{align}
	\mathcal{L}_T(\pi_\phi, L) & =  \E_{p(\theta_{0:L})\likelihoodi{0}}\left[ \glo_L(\theta_{0:L},h_T)  \right] \\
	& =  \E_{\priori{0:L}\likelihoodi{0}} \left[ \log \frac{\likelihoodi{0}}{\frac{1}{L+1} \sum_{\ell=0}^L \likelihoodi{\ell} }  \right] \\
	& =  \E_{\priori{0:L}\likelihoodi{0}} \left[ \log \frac{\likelihoodi{0}}{\sum_{\ell=0}^L \likelihoodi{\ell} }  \right] + \log(L+1)
\end{align}

Differentiating this gives:
\begin{align}
	\dydx{\mathcal{L}_T}{\phi} & = \E_{\priori{0:L}} \left[ \int  \dydx{}{\phi}  \left(\likelihoodi{0}  \log \frac{\likelihoodi{0}}{\sum_{\ell=0}^L \likelihoodi{\ell} } \right) d h_T \right] \\
	& = \E_{\priori{0:L}}  \left[\int  \log \frac{\likelihoodi{0}}{\sum_{\ell=0}^L \likelihoodi{\ell} }\dydx{}{\phi} \likelihoodi{0}  +\likelihoodi{0}     \dydx{}{\phi} \log \frac{\likelihoodi{0}}{\sum_{\ell=0}^L \likelihoodi{\ell} } d h_T \right]  \\
	& = \E_{\priori{0:L}}  \Bigg[ ~~  \int  \likelihoodi{0}  \log \frac{\likelihoodi{0}}{\sum_{\ell=0}^L \likelihoodi{\ell} } \left( \dydx{}{\phi} \log \likelihoodi{0} \right)   d h_T  \label{eq:logtrick} \\
	& \qquad\qquad\  + \int \likelihoodi{0} \left( \dydx{}{\phi} \log \likelihoodi{0} \right) d h_T   - \int \likelihoodi{0} \dydx{}{\phi} \log  \sum_{\ell=0}^L \likelihoodi{\ell} d h_T  \Bigg] \\
	& = \E_{\priori{0:L}}\E_{\likelihoodi{0}} \left[  \log \frac{\likelihoodi{0}}{\sum_{\ell=0}^L \likelihoodi{\ell} } \left( \dydx{}{\phi} \log \likelihoodi{0} \right) - \dydx{}{\phi}  \log  \sum_{\ell=0}^L \likelihoodi{\ell} \right]  \label{eq:rev_logtrick}.
\end{align}
In line  \eqref{eq:logtrick} we used the log-trick $\dydx{}{x} f(x) = f(x) \left(  \dydx{}{x} \log f(x) \right)$ and again in line  \eqref{eq:rev_logtrick} (in the reverse direction), together with the fact $\int \dydx{}{\phi}  \likelihoodi{0} dh_T =\dydx{}{\phi} \int  \likelihoodi{0} dh_T = 0$.

\subsection{Expanded reparametrized gradient}
For completeness, we provided a fully expanded form of the gradient in \eqref{eq:lossgrad}, computed using the chain rule.  In practice, derivatives of this form are calculated automatically in PyTorch \citep{pytorch}.

Initially, we set up some additional notation. Suppose $\xi$ the design is of dimension $D_1$ and $y$ the observation is of dimension $D_2$. Then $u = (\xi,y)$ is of dimension $D_1 + D_2$. For an arbitrary scalar quantity $x$, we have
\begin{equation}
	\pypx{x}{u} = \begin{pmatrix}
		\pypx{x}{\xi^{(1)}} & ... & \pypx{x}{\xi^{(D_1)}} & \pypx{x}{y^{(1)}} & ... & \pypx{x}{y^{(D_2)}}
	\end{pmatrix}
\end{equation}
and
\begin{equation}
\pypx{u}{x} = \begin{pmatrix}
\pypx{\xi^{(1)}}{x} & ... & \pypx{\xi^{(D_1)}}{x} & \sum_{d=1}^{D_1} \pypx{y^{(1)}}{\xi^{(d)}}\pypx{\xi^{(d)}}{x} & ... & \sum_{d=1}^{D_1} \pypx{y^{(D_2)}}{\xi^{(d)}}\pypx{\xi^{(d)}}{x}
\end{pmatrix}^\top.
\end{equation}
This notation enables us to concisely and clearly deal with both scalar and vector quantities. In general, the derivatives $\partial a/\partial b$ and $da/db$ represent a matrix of shape $(\dim a, \dim b)$ where one or both of $a, b$ may have dimension 1. 
This notation is particularly attractive because the Chain Rule for partial derivatives can be concisely expressed as follows. Suppose $a = a(b_1(c),...,b_n(c), c)$, then the total derivative is given by
\begin{equation}
	\dydx{a}{c} = \pypx{a}{c} + \sum_{i=1}^n \pypx{a}{b_i}\dydx{b_i}{c}
\end{equation}
where the normal rules of matrix multiplication apply.
We now apply this in the context of the function $g(\theta_{0:L},h_T)$ which was defined in Section~\ref{sec:gradientestimation}.

We have $g = g(\theta_{0:L},u_1,...,u_T)$. The Chain Rule implies that
\begin{equation}
	\dydx{g}{\phi} = \sum_{t=1}^T \pypx{g}{u_t}\dydx{u_t}{\phi}.
\end{equation}
We also have, for $t=1,...,T$, that $u_t = u(\phi, h_{t-1},\theta_0, \epsilon_t) = u(\phi,u_1,...,u_{t-1},\theta_0,\epsilon_t)$. This represents the dependence of $\xi_t$ on $h_{t-1}$ via $\pi_\phi$, and the further dependence of $y_t$ on $\theta_0$ and $\epsilon_t$. Expanding the derivatives again using the Chain Rule gives
\begin{equation}
\dydx{g}{\phi} = \sum_{t=1}^T \pypx{g}{u_t} \left(\pypx{u_t}{\phi} + \sum_{s=1}^{t-1}\pypx{u_t}{u_s}\dydx{u_s}{\phi} \right).
\end{equation}
Again, we can expand the total derivative to give
\begin{equation}
\label{eq:twoformgrad}
\dydx{g}{\phi} = \sum_{t=1}^T \pypx{g}{u_t} \left(\pypx{u_t}{\phi} + \sum_{s=1}^{t-1}\pypx{u_t}{u_s}\left(\pypx{u_s}{\phi} + \sum_{r=1}^{s-1} \pypx{u_s}{u_r}\dydx{u_r}{\phi} \right) \right).
\end{equation}
Rather than continuing in this manner, we observe that the current expansion \eqref{eq:twoformgrad} can be split up as follows
\begin{equation}
\dydx{g}{\phi} = \sum_{t=1}^T \pypx{g}{u_t} \pypx{u_t}{\phi} + \sum_{1\le s<t\le T}\pypx{g}{u_t}\pypx{u_t}{u_s}\pypx{u_s}{\phi} + \sum_{1\le r < s < t \le T} \pypx{g}{u_t}\pypx{u_t}{u_s} \pypx{u_s}{u_r}\dydx{u_r}{\phi} 
\end{equation}
which shows that we have completely enumerated over all paths of length 1 and 2 through the computational graph, and the final term with a total derivative concerns paths of length 3 or more.
This approach can be naturally extended to enumerate over all paths. To write this concisely, we introduce a new variable $k$ which denotes the length of the path, and then a sum over all increasing sequences $1\le t_1<...<t_k\le T$. This gives
\begin{equation}
\label{eq:maingrad}
\dydx{g}{\phi} = \sum_{k=1}^T \left[ \sum_{1\le t_1<...<t_k\le T} \pypx{g}{u_{t_k}} \pypx{u_{t_k}}{u_{t_{k-1}}} ... \pypx{u_{t_2}}{u_{t_1}} \pypx{u_{t_1}}{\phi}\right].
\end{equation}
This can be written concisely as
\begin{equation}
	\dydx{g}{\phi} = \sum_{\substack{k\in\{1,\dots,T\} \\ 1\le t_1<...<t_k \le T}} \pypx{\glo}{u_{t_k}} \left( \prod_{j=1}^{k-1} \pypx{u_{t_{j+1}}}{u_{t_j}}\right) \pypx{u_{t_1}}{\phi}
\end{equation}
where the product is interpretted in the order given in \eqref{eq:maingrad} for the matrix multiplication to operate correctly, and an empty product is equal to the identity.


\section{Experiment details}
\label{sec:exp-appendix}

\begin{figure}[t]
	\centering
	\includegraphics[width=0.6\textwidth]{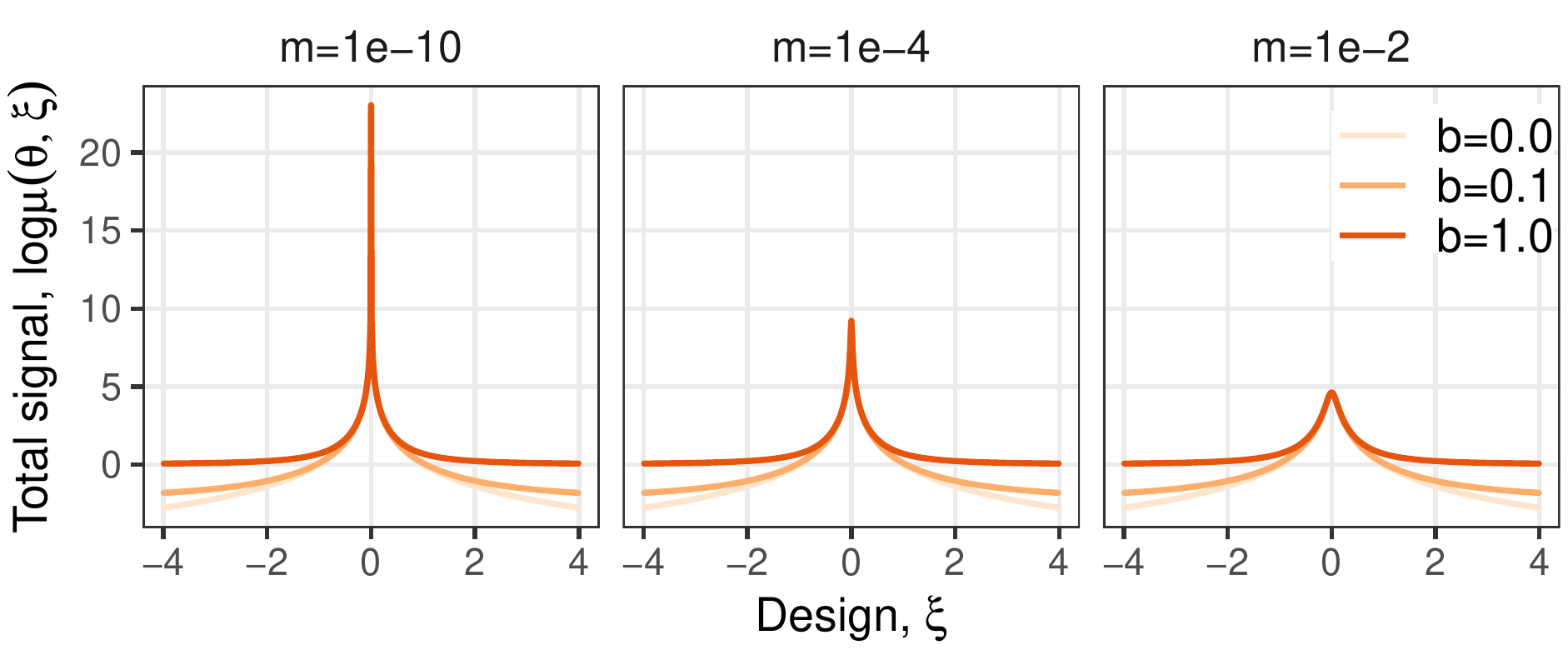}
	\caption{Log-total intensity}
	\label{fig:g_map}
\end{figure}

Our experiments were implemented using PyTorch \citep{pytorch} and Pyro \citep{pyro}. 
An open-source implementation of DAD, including code for reproducing each experiment, is available at \url{https://github.com/ae-foster/dad}. 
Full details on running the code are given in the \texttt{README.md} file. 

\subsection{Location Finding}\label{sec:appendix_locfin}

In this experiment we have $K$ hidden objects or \textit{sources} in $\R^d$, $d \in \{1, 2, 3\}$ and aim to learn their locations, $\theta= \{\theta_k\}_{k=1}^K$. The number of sources, $K$, is assumed to be known. Each of the sources emits a signal with intensity obeying the inverse-square law. In other words, if a source is located at $\theta_k$ and we perform a measurement at a point $\xi$, the signal strength will be proportional to $\frac{1}{\|\theta_k-\xi\|^2}$. 

Since there are multiple sources, we consider the total intensity at location $\xi$, which is a superposition of the individual ones
\begin{equation}
\mu(\theta, \xi) = b + \sum_{k=1}^K \frac{\alpha_k}{m+ \|\theta_k-\xi\|^2},
\end{equation}
where $\alpha_k$ can be known constants or random variables,  $b, m>0$ are constants controlling background and maximum  signal,  respectively.   Figure \ref{fig:g_map} shows the effect $b$ and $m$ have on log total signal strength.

We  place a standard normal prior on each of the location parameters $\theta_k$ and we observe the log total intensity with some Gaussian noise. We therefore have the following prior and likelihood:
\begin{equation}
\theta_k \iid N(0_d, I_d), ~\log y \mid \theta, \xi \sim N(\log \mu(\theta, \xi), \sigma).
\end{equation}

The model hyperparameters used in our experiments can be found in the table below. 
\begin{center}
	\begin{tabular}{lr}
		Parameter & Value\\
		\hline
		Number of sources, $K$ & 2 \\
		Base signal, $b$ & $10^{-1}$ \\
		Max signal, $m$ & $10^{-4}$ \\
		$\alpha_1, \alpha_2$ & 1 \\
		Signal noise, $\sigma$ & $0.5$ \\
	
	\end{tabular}
\end{center}

We trained a DAD network to amortize experimental design for this problem, using the neural architecture outlined in Section~\ref{sec:arch}. Both the encoder and the decoder are simple feed-forward neural networks with a single hidden layer; details in the following table. For the encoder
\begin{center}
	\begin{tabular}{llrr}
		Layer & Description & Dimension & Activation \\
		\hline
		Input & $\xi$, $y$ & 3 & - \\
		H1 & Fully connected & 256 & ReLU \\
		Output & Fully connected  & 16 & - \\
	\end{tabular}
\end{center}
and for the emitter
\begin{center}
	\begin{tabular}{llrr}
		Layer & Description & Dimension & Activation \\
		\hline
		Input & $R(h_t)$ & 16 & - \\
		H1 & Fully connected & 2 & - \\
		Output & $\xi$ & 2  & -
	\end{tabular}
\end{center}

Since the likelihood is reparametrizable, we use \eqref{eq:lossgrad} to calculate approximate gradients.  We optimized the network using Adam~\citep{kingma2014adam}  with exponential learning rate annealing with parameter $\gamma$. Full details are given in the following table.

\begin{center}
\begin{tabular}{lr}
	Parameter & Value \\
	\hline
	Inner samples, $L$ & 2000 \\
	Outer samples & 2000 \\
	Initial learning rate & $5\times 10^{-5}$ \\
	Betas & (0.8, 0.998) \\
	$\gamma$ & 0.98 \\
	Gradient steps & 50000 \\
	Annealing frequency & 1000
\end{tabular}
\end{center}

We used a greater number of inner and outer samples for a more accurate estimate of $\mathcal{I}_T(\pi)$ for evaluation when computing the presented values in Table~\ref{tab:locfin_T30_results} and in our Training Stability ablation, specifically $L=5 \times 10^5$ inner samples, and 256 (variational) or 2048 (other methods) outer samples.

\textbf{Deployment times} Deployment speed tests were performed on a CPU-only machine witht the following specifications:
\begin{center}
	\begin{tabular}{ll}
		Memory & 16 GB 2133 MHz LPDDR3 \\
		Processor & 2.8 GHz Quad-Core Intel Core i7 \\
		Operating System & MacOS BigSur v.11.2.3
	\end{tabular}
\end{center}
We took the mean and $\pm1$ s.e. over 10 realizations.
Deployment times for all methods are given in the following table
\begin{center}
	\centering
	\begin{tabular}{lr}
		Method & Deployment time (s) \\
		\hline
		Random & 0.0026 $\pm$ \phantom{0}0.0001 \\
		Fixed &  0.0018 $\pm$ \phantom{0}0.0001   \\
		DAD &   0.0474 $\pm$ \phantom{0}0.0003  \\
		\hline
		Variational &   8963.2\phantom{000} $\pm$ 42.2\phantom{000}
	\end{tabular}
\end{center}

\begin{figure}[t]
	\centering
	\includegraphics[width=0.6\textwidth]{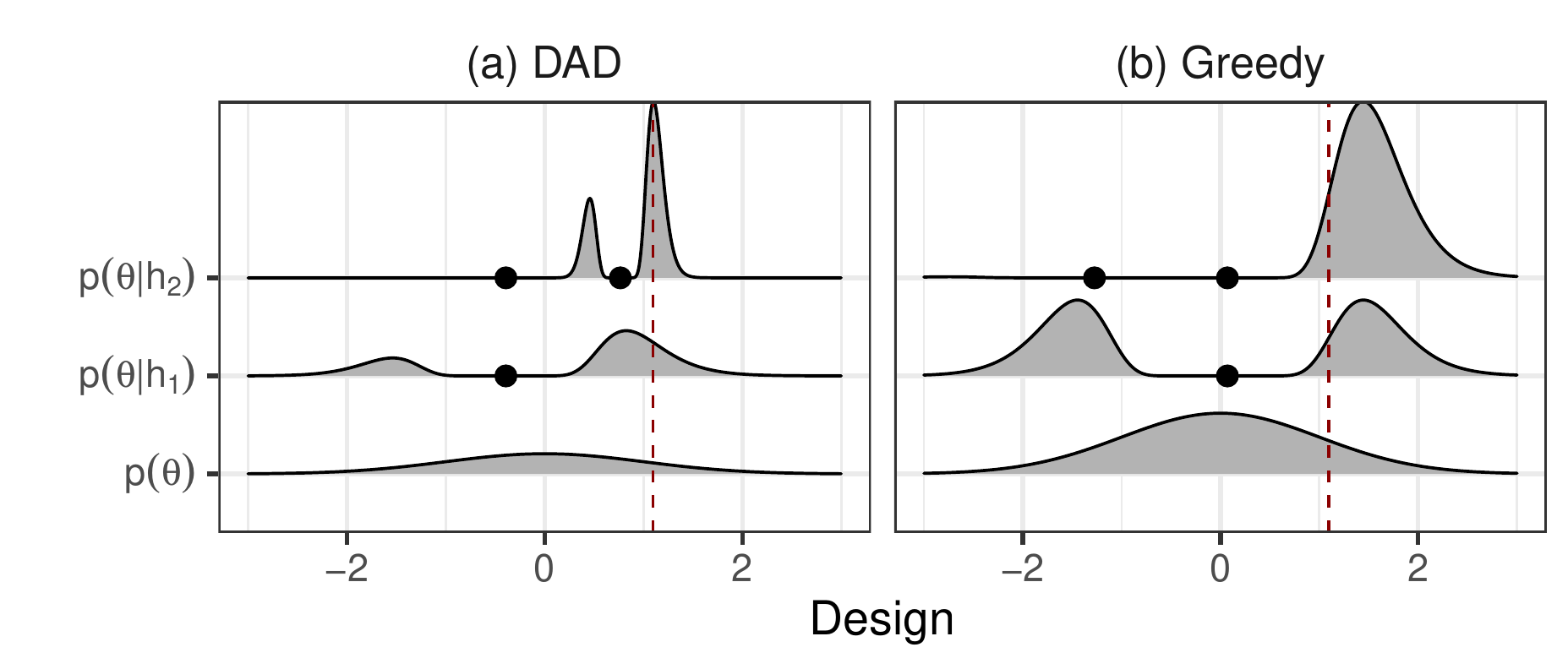}
	\caption{Posterior distributions of the location finding example with $K=1$ source $\R$.}
	\label{fig:locfin_posterior}
\end{figure}

\textbf{Discussion details}
In the discussion, we used a simpler form of the same model with $K=1$ source and $\theta\in\R,\xi\in\R$.
In this simplified setting, we can calculate the true optimal myopic (greedy) baseline  using  numerical integration. We evaluate equation \eqref{eq:mi} using line integrals as follows
\begin{align}
I_t(\xi) =&  \int p(\theta| h_{t-1}) \E_{p(y\vert \theta)} \left[ \log \frac{p(y \vert \theta)}{\int p(\theta' \vert  h_{t-1}) p(y\vert \theta') d\theta' }  \right] d\theta   \\
=&  \int p(\theta| h_{t-1}) \E_{p(y\vert \theta)} \left[ \log \int p(\theta' \vert  h_{t-1}) p(y\vert \theta') d\theta'    \right] d\theta + C \label{eq:greedy_grid}
\end{align}
where $C =- H(p(y\vert\theta))$ is the entropy of a Gaussian, location independent and therefore constant with respect to $\xi$. 
We calculate \eqref{eq:greedy_grid}  for a range of designs,  $\xi \in \Xi_{\text{grid}}$, and select the optimal design $\xi^* = \arg\max_{\Xi_{\text{grid}}} I_t(\xi) $. The integrals themselves are also calculated using numerical integration on a grid, $\Theta_{\text{grid}}$, and use sampling to calculate the inner expectation; further details can in the table below.
\begin{center}
\begin{tabular}{ll}
	Parameter & Value \\
	\hline
	Design grid,   $\Xi_{\text{grid}}$ &  300 equally spaced from -3 to 3  \\ 
	$\theta$ grid,  $\Theta_{\text{grid}}$ & 600 equally spaced from -4 to 4 \\
	$y$ samples for inner expectation & 400 \\
\end{tabular}
\end{center}

It is important to emphasize that even in this simple one-dimensional setting evaluating the myopic strategy is extremely costly and may require more sophisticated numerical integration techniques (e.g. quadrature) as posteriors become more peaked.  Furthermore, as Figure \ref{fig:locfin_posterior} indicates, the resulting posteriors are complex and multi-modal even in 1D.  This multi-modality may also be a reason why the variational method does not work well in this example.

\begin{figure}[t!]
	\centering
	\includegraphics[width=0.6\columnwidth]{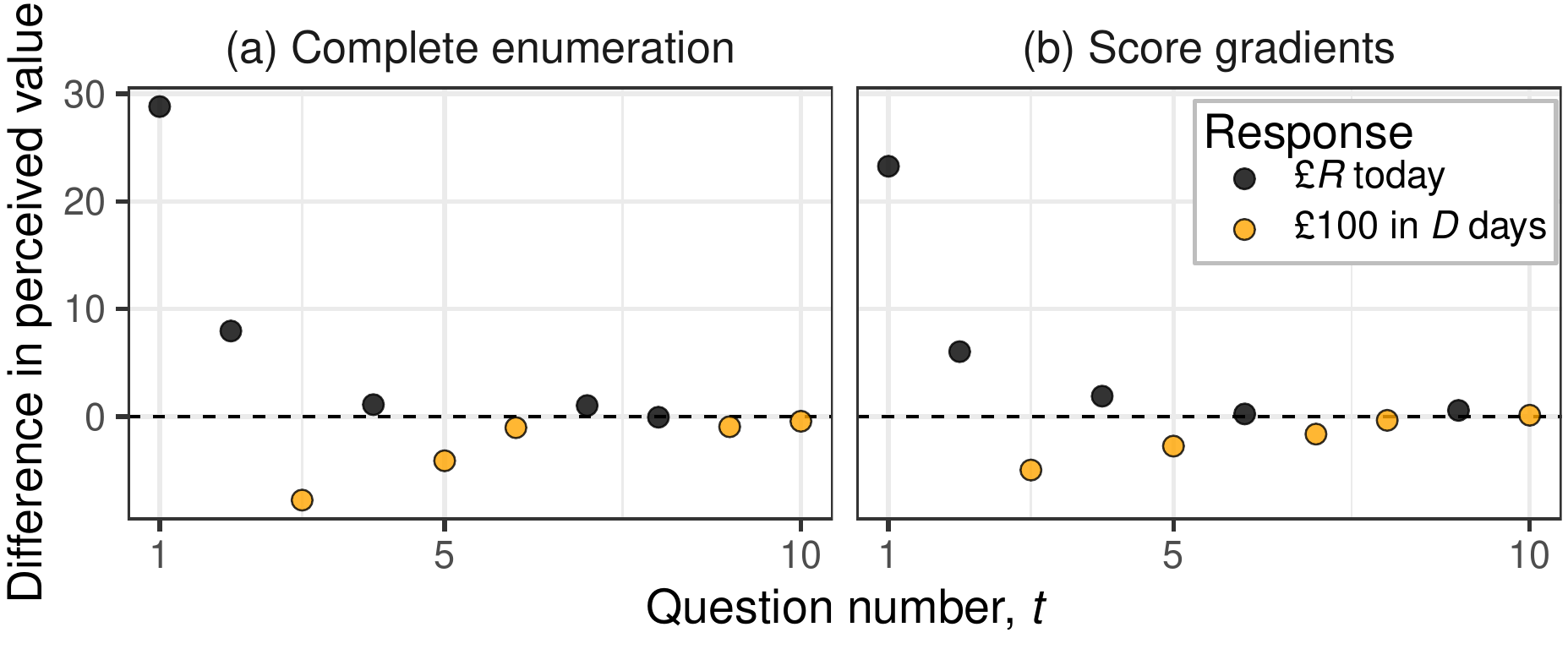}
	\caption{Comparison of two gradient methods for the hyperbolic temporal discounting model with $T=10$ experiments.}
	\label{fig:designs_gradients_comparison}
\end{figure} 

\subsection{Hyperbolic temporal discounting}
\label{sec:app:hyperbolic}

We consider a hyperbolic temporal discounting model \citep{mazur1987adjusting,vincent2016hierarchical,vincent2017} in which a participant's behaviour is characterized by the latent variables $\theta=(k, \alpha)$ with prior distributions
\begin{align}
	\log k \sim N(-4.25, 1.5) \qquad
	\alpha \sim \text{HalfNormal}(0, 2)
\end{align}
where the HalfNormal distribution is a Normal distribution truncated at 0.
For given $k,\alpha$, the value of the two propositions ``\textsterling$R$ today'' and ``\textsterling100 in $D$ days'' with design $\xi=(R,D)$ are given by
\begin{align}
	V_0 = R, \qquad
	V_1 = \frac{100}{1 + kD}.
\end{align}
The probability of the participant selecting the second option, $V_1$, rather than $V_0$ is then modelled as 
\begin{equation}
	p(y=1|k,\alpha,R,D) = \epsilon + (1-2\epsilon) \Phi\left( \frac{V_1-V_0}{\alpha} \right)
\end{equation}
where $\Phi$ is the c.d.f. of the standard Normal distribution, i.e. 
\begin{equation}
	\Phi(z) = \int_{-\infty}^z \frac{1}{\sqrt{2\pi}}\exp\left({-\tfrac{1}{2} z^2 }\right)
\end{equation}
and we fix $\epsilon=0.01$.
We considered the iterated version of this experiment, modelling $T=20$ experiments with each sampled setting for the latents $k,\alpha$.

We began by training a DAD network to amortize experimental design for this problem.
The design parameters $R,D$ have the constraints $D>0$ and $0<R<100$.
We represented $R,D$ in an unconstrained space $\xi_d,\xi_r$ and transformed them using the maps 
\begin{align}
	D = \exp\left( \xi_d  \right) \qquad
	R = 100\ \text{sigmoid}(\xi_r)
\end{align}
We used the neural architecture outlined in Section~\ref{sec:arch}. For the encoder $E_{\phi_1}$ we used the following network with two hidden layers
\begin{center}
	\begin{tabular}{llrr}
		Layer & Description & Dimension & Activation \\
		\hline
		Design input & $\xi_d,\xi_r$ & 2 & - \\
		H1 & Fully connected & 256 & Softplus \\
		H2 & Fully connected & 256 & Softplus \\
		H3 & Fully connected & 16 & - \\
		H3' & Fully connected & 16 & - \\
		Output & $y \odot H3 + (1-y) \odot H3'$ & 16 & -
	\end{tabular}
\end{center}
The emitter network $F_{\phi_2}$ similarly used two hidden layers as follows
\begin{center}
	\begin{tabular}{llrr}
		Layer & Description & Dimension & Activation \\
		\hline
		Input & $R(h_t)$ & 16 & - \\
		H1 & Fully connected & 256 & Softplus \\
		H2 & Fully connected & 256 & Softplus \\
		Output & $\xi_d,\xi_r$ & 2 & - \\
	\end{tabular}
\end{center}

Since the number of experiments we perform is relatively large ($T=20$), we constructed a score function  gradient estimator of \eqref{eq:score_grad} (see also \S~\ref{sec:app_score_grad} for details) and optimized this network with Adam~\citep{kingma2014adam}.We used exponential learning rate annealing with parameter $\gamma$. Full details are given in the following table.
\begin{center}
\begin{tabular}{lr}
	Parameter & Value \\
	\hline
	Inner samples, $L$ & 500 \\
	Outer samples & 500 \\
	Initial learning rate & $10^{-4}$ \\
	Betas & (0.9, 0.999)  \\
	$\gamma$ & 0.96 \\
	Gradient steps & 100000 \\
	Annealing frequency & 1000
\end{tabular}
\end{center}
For the fixed baseline, we used the same optimization settings, except we set the initial learning rate to $10^{-1}$.
We trained the DAD and fixed methods on a machine with 8 Intel(R) Xeon(R) CPU E5-2637 v4 @ 3.50GHz CPUs, one GeForce GTX 1080 Ti GPU, 126 GiB memory running Fedora 32. Note this is \emph{not} the machine used to conudct speed tests.
For the Badapted baseline of \citet{vincent2017}, we used the public code provided at \url{https://github.com/drbenvincent/badapted}. We used 50 PMC steps with 100 particles. For the baselines of \citet{frye2016measuring} and \citet{kirby2009one}, we used the public code provided at \url{https://github.com/drbenvincent/darc-experiments-matlab/tree/master/darc-experiments}, which we reimplemented in Python.
These methods do not involve a pre-training step, except that we did not include time to compute the first design $\xi_1$ within the speed test, as this can be computed before the start of the experiment.

To implement the deployment speed tests fairly, we ran each method on a lightweight CPU-only machine, which more closely mimics the computer architecture that we might expect to deploy methods such as DAD on. The specifications of the machine we used are described below
\begin{center}
	\begin{tabular}{ll}
		Memory & 7.7GiB \\
		Processor & Intel® Core™ M-5Y10c CPU @ 0.80GHz $\times$ 4 \\
		Operating System & Ubuntu 16.04 LTS
	\end{tabular}
\end{center}
The values in Table~\ref{tab:hyperbolic_time} show the mean and standard error of the times observed from 10 independent runs on a idle system.
To make the final evaluation for each method in Table~\ref{tab:hyperbolic_eig}, we computed the sPCE and sNMC bounds using $L=5000$ inner samples and 10000 outer samples of the outer expectation. We present the mean and standard error from the outer expectation over 10000 rollouts.
 
\subsubsection{Ablation: total enumeration}\label{sec:app_temporal_ablation}
We compare the two methods for estimating gradients for the case of discrete observations: total enumeration of  histories (Equation~\ref{eq:total_enum}) and score function gradient estimator  (Equation~\ref{eq:score_grad}). To this end we train DAD networks to perform $T=10$ experiments, which gives rise to a total of $2^{10}=1024$ possible histories. 

Find that the two methods perform the same, both quantitatively and qualitatively. Table~\ref{tbl:temporal_grad_comparison} reports the estimated upper and lower bounds on the mutual information objective, indicating statistically equal performance of the two methods (mean estimates are within 2 standard errors of each other).  Figure~\ref{fig:designs_gradients_comparison} demonstrates the qualitative similarity in the designs learnt by the two networks.

\begin{table}[h]
\begin{center}
\begin{tabular}{lrr} 
          & Lower bound, $\mathcal{L}_{10} $ & Upper bound, $\mathcal{U}_{10}$ \\
          \hline
Complete enumeration  & $4.068  \pm 0.0124$ & $4.090 \pm 0.0126$ \\
Score function gradient & $4.037 \pm 0.0126$ & $4.058 \pm0.0128$ \\
\end{tabular}
\caption {Final lower and upper bounds on the total information $\mathcal{I}_{10}(\pi)$ for the Hyperbolic Temporal Discounting experiment with $T=10$ experiments and different gradient estimation schemes (see \S~\ref{sec:gradientestimation} and \S~\ref{sec:app_score_grad} for details). The bounds are finite sample estimates of $\mathcal{L}_{10}(\pi,L)$ and $\mathcal{U}_{10}(\pi,L)$ with $L=5000$. The errors indicate $\pm1$ s.e. over the sampled histories.}
\label{tbl:temporal_grad_comparison}
\end{center}
\end{table}

\subsection{Death process}

\begin{figure}[t!]
	\centering
	\includegraphics[width=0.3\columnwidth]{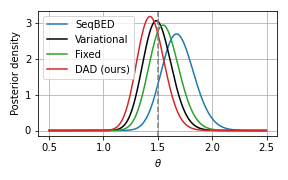}
	\caption{Comparison of posteriors obtained from a single rollout of the Death Process, used to compute the information gains quoted in Section~\ref{sec:death}. The dashed line indicates the true value $\theta=1.5$ used to simulate responses.}
	\label{fig:death_posterior}
\end{figure}

For the Death Process model \citep{cook2008optimal}, we use the settings that were described by \citet{kleinegesse2020sequential}. Specifically, we use a truncated Normal prior for the infection rate
\begin{equation}
\theta \sim \text{TruncatedNormal}(\mu=1, \sigma=1, \text{min}=0, \text{max}=\infty).
\end{equation}
The likelihood is then given by
\begin{equation}
	\eta = 1 - \exp(-\xi  \theta) \qquad y|\theta,\xi \sim \text{Binomial}(N, \eta)
\end{equation}
where we set $N=50$. We consider a sequential version of this experiment as in \citet{kleinegesse2020sequential}, with $T=4$ and in which an independent stochastic process is observed at each step, meaning there are no constraints relating $\xi_1,...,\xi_4$ other than the natural constraint $\xi_t>0$.

We began by training a DAD network to perform experimental design for this problem. 
We used the neural architecture outlined in Section~\ref{sec:arch}. For the encoder $E_{\phi_1}$ we used the following network with two hidden layers
\begin{center}
	\begin{tabular}{llrr}
		Layer & Description & Dimension & Activation \\
		\hline
		Input & $\xi,y$ & 2 & - \\
		H1 & Fully connected & 128 & Softplus \\
		H2 & Fully connected & 128 & Softplus \\
		Output & Fully connected & 16 & - \\
	\end{tabular}
\end{center}
The emitter network $F_{\phi_2}$ similarly used two hidden layers as follows
\begin{center}
	\begin{tabular}{llrr}
		Layer & Description & Dimension & Activation \\
		\hline
		Input & $R(h_t)$ & 16 & - \\
		H1 & Fully connected & 128 & Softplus \\
		H2 & Fully connected & 128 & Softplus \\
		Output & $\xi$ & 1 & Softplus \\
	\end{tabular}
\end{center}

Although the number of experiments we perform is relatively small ($T=4$),  we could not use complete enumeration due to the prohibitively large size of the outcome space ($|\mathcal{Y}|=51$). Hence, we constructed a score function  gradient estimator of \eqref{eq:score_grad} (see also \S~\ref{sec:app_score_grad} for details) and optimized the DAD network with Adam~\citep{kingma2014adam}.We used exponential learning rate annealing with parameter $\gamma$. Full details are given in the following table.\begin{center}
	\begin{tabular}{lr}
		Parameter & Value \\
		\hline
		Inner samples, $L$ & 500 \\
		Outer samples & 500 \\
		Initial learning rate & $0.001$ \\
		Betas & $(0.9, 0.999)$ \\
		$\gamma$ & 0.96 \\
		Gradient steps & 100000 \\
		Annealing frequency & 1000
	\end{tabular}
\end{center}
For the fixed baseline, we used the same optimization settings, except we set the initial learning rate to $10^{-1}$ and we set $\gamma=0.85$.
We trained the DAD and fixed methods using the same machine as used for training in Section~\ref{sec:app:hyperbolic}.
For the variational baseline, we used a truncated Normal variational family to approximate the posterior at each step.
We used SGD with momentum to optimize the design at each step, and to optimize the variational approximation to the posterior at each step. We used exponential learning rate annealing with paramter $\gamma$. The settings used were	
\begin{center}
	\begin{tabular}{lr}
		Parameter & Value \\
		\hline
		Design inner samples & 250 \\
		Design outer samples & 250 \\
		Design initial learning rate & $10^{-2}$ \\
		Design $\gamma$ & 0.9 \\
		Design gradient steps & 5000 \\
		Inference initial learning rate & $10^{-3}$ \\
		Inference $\gamma$ & 0.2 \\
		Inference gradient steps & 5000 \\
		Momentum & 0.1 \\
		Annealing frequency & 1000
	\end{tabular}
\end{center}
For the SeqBED baseline, we used the code publicly available at \url{https://github.com/stevenkleinegesse/seqbed}.
The speed tests except for SeqBED were implemented as in Section~\ref{sec:app:hyperbolic}. For SeqBED and the variational method, we did not include the time to compute the first design as deployment time, as this can be computed before the start of the experiment.
Due to its long-running nature, we implemented the speed test for SeqBED using a more powerful machine with 40 Intel(R) Xeon(R) CPU E5-2680 v2 @ 2.80GHz processors and 189GiB memory. Therefore, the timing value for SeqBED given in Table~\ref{tab:death_eig} represents a significant \emph{under-estimate} of the expected computational time required to deploy this method. However, we note that SeqBED can be applied to a broader class of implicit likelihood models.

For evaluation of $\mathcal{I}_4(\pi)$ in the Death Process, it is possible to compute the information gain $H[p(\theta)] - H[p(\theta|h_T)]$ to high accuracy using numerical integration.
We then took the expectation of the information gain over rollouts, see Table~\ref{tab:death_eig} for the exact number of rollouts used. 
This gives us an estimate
\begin{equation}
	\mathcal{I}_4(\pi) = \E_{p(h_T|\pi)} \left[ H[p(\theta)] - H[p(\theta|h_T)]  \right]
\end{equation}
which is shown to be a valid form for the total EIG in Section~\ref{sec:proofs}.

For a comparison with SeqBED which is too slow to use this evaluation, we instead performed one rollout of each of our methods using a fixed value $\theta=1.5$. This is close in spririt to the evaluation used in \citet{kleinegesse2020sequential}. 
Figure~\ref{fig:death_posterior} shows the posterior distributions obtained from this rollout. The information gains were then computed using the aforementioned numerical integration and are quoted in Section~\ref{sec:death}. We observe that, visually, the posterior distributions are similar, and cluster near to the true value of $\theta$.

\end{document}